\newcommand{\kl}{D_{\text{KL}}}
\newcommand{\klv}{\hat{D}_{\scriptscriptstyle \text{KL}}^{\scriptscriptstyle \text{VM}}}
\newtheorem{assumption}{Assumption}
\newcommand{\dR}{\mathbb{R}}
\newcommand{\dP}{\mathbb{P}}
\newcommand{\bs}{\mathbf{s}}
\newcommand{\cX}{\mathcal{X}}
\newcommand{\cF}{\mathcal{F}}
\newcommand{\what}{\widehat}
\newcommand{\eps}{\varepsilon}
\title[Sample Complexity of Nonparametric Closeness Testing]{Sample Complexity of Nonparametric Closeness Testing for Continuous Distributions and Its Application to Causal Discovery with Hidden Confounding}
\begin{document}

\maketitle

\begin{abstract}
    We study the problem of closeness testing for continuous distributions and its implications for causal discovery.
    Specifically, we analyze the sample complexity of distinguishing whether two multidimensional continuous distributions are identical or differ by at least $\epsilon$ in terms of Kullback-Leibler (KL) divergence under non-parametric assumptions.
    To this end, we propose an estimator of KL divergence which is based on the von Mises expansion.
    Our closeness test attains optimal parametric rates under smoothness assumptions.
    Equipped with this test, which serves as a building block of our causal discovery algorithm to identify the causal structure between two multidimensional random variables, we establish sample complexity guarantees for our causal discovery method.
    To the best of our knowledge, this work is the first work that provides sample complexity guarantees for distinguishing cause and effect in multidimensional non-linear models with non-Gaussian continuous variables in the presence of unobserved confounding.
\end{abstract}

\section{Introduction}

The observation of a correlation between two variables $A$ and $B$ raises a fundamental question in causal inference: Does one variable cause the other, or is the correlation merely the result of a hidden confounding factor?
As depicted in Figure \ref{fig: 3cases}, the explanation generally falls into one of three possibilities: $A$ causes $B$, $B$ causes $A$, or a hidden variable $U$ causes both. 

Distinguishing among these causal structures is challenging.
It is well known that knowing the observational joint distribution $P(A, B)$ is not sufficient for this task, and it is necessary to perform \textit{interventions}.
In the framework of Pearl’s do-calculus \citep{pearl1995causal}, an intervention 
\( do(A = a) \) forces \( A \) to a specific value \( a \), allowing us to observe
changes in the distribution of \( B \). 
If the true structure is \( A \rightarrow B \), 
intervening on \( A \) will affect \( B \), but not vice versa.
For \( B \rightarrow A \), 
the reverse is true. In the case of \( A \leftarrow U \rightarrow B \), interventions on 
\( A \) or \( B \) will not influence the other, as their correlation is only due to \( U \).

A considerable body of work in the causal structure learning literature focuses on minimizing the number of required interventions for determining the causal structure \citep[e.g.,][]{eberhardt2007causation, eberhardt2012almost, hauser2012characterization, shanmugam2015learning, kocaoglu2017cost, lee2018structural, greenewald2019sample, squires2020active, mokhtarian2022learning}.
However, most do so assuming access to an infinite number of samples,
while in practice, the available data is often limited. 
Therefore, to ensure the reliability and applicability of causal discovery methods, it is necessary to establish sample complexity guarantees. 
A key question in this context is: How many samples are required to reliably infer the causal direction? 
In order to answer this question,
we will begin by analyzing the sample complexity of estimating a measure of closeness between certain distributions, for instance, to distinguish between the observed distribution of $B$ and the interventional distribution of $B$ under $do(A=a)$.
This will allow us to test whether there is indeed a causal influence from one variable to the other.
Existing methods for closeness testing offer theoretical guarantees in discrete or structured domains \citep{diakonikolas2015optimal, diakonikolas2021optimal, diakonikolas2024testing}.
However, extending these guarantees to continuous variables remains challenging due to the infinite support and the complexity of estimating densities. 
Our work fills this gap by developing a closeness testing framework for \textit{continuous} densities with theoretical guarantees. More specifically, we characterize the sample complexity of our test which allows us to apply it for causal discovery in the case of \emph{\bfseries continuous}, \emph{\bfseries multidimensional} variables. 

\begin{figure}[t]
    \centering
    \begin{tikzpicture}[node distance=0.1cm, thick, ->, >=Stealth]
        \node (X) at (0,0) [circle, draw, scale=0.8] {$A$};
        \node (Y) at (0,-2) [circle, draw, scale=0.8] {$B$};
        \draw[->] (X) -- (Y);
    \end{tikzpicture} \hspace{1cm}
    \begin{tikzpicture}[node distance=2cm, thick, ->, >=Stealth]
        \node (X) at (0,0) [circle, draw, scale=0.8] {$A$};
        \node (Y) at (0, -2) [circle, draw, scale=0.8] {$B$};
        \draw[->] (Y) -- (X);
    \end{tikzpicture} \hspace{1cm}
    \begin{tikzpicture}[node distance=2cm, thick, ->, >=Stealth]
        \node (X) at (0,0) [circle, draw, scale=0.8] {$A$};
        \node (Y) at (3,0) [circle, draw, scale=0.8] {$B$};
        \node (U) at (1.5,2) [circle, draw, dashed, scale=0.8] {$U$};
        \draw[->] (U) -- (X);
        \draw[->] (U) -- (Y);
    \end{tikzpicture}
    \caption{Three causal relationships between correlated variables $A$ and $B$.}\label{fig: 3cases}
\end{figure}
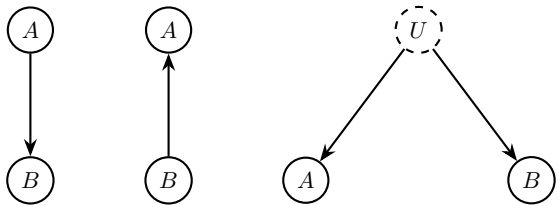

\noindent\textbf{Contributions}
Our main contributions are as follows:
\begin{itemize}
    \item [(i)]
    We derive an exponential concentration inequality for an estimator of KL divergence based on Von Mises expansion when the joint densities are estimated through kernel density estimation (KDE).
    \item [(ii)]
    Building on this result, we design a \emph{closeness test} to decide whether two continuous probability densities are equal or differ by at least a given amount $\epsilon$ in terms of their KL divergence.
    \item[(iii)] 
    We harness this test to identify the causal structure between two multidimensional random variables and provide sample complexity guarantees for our approach, in the presence of hidden confounding.
\end{itemize}

\paragraph{Outline of the paper}
In Section \ref{sec: background}, we briefly review the necessary background. 
In Section \ref{sec: concentration kl}, we establish the exponential concentration properties of a Von Mises estimator and propose a closeness test using this estimator for KL divergence. 
In Section \ref{sec: discovery},  we discuss how our closeness test can be harnessed to learn the causal relationship between $d$-dimensional continuous variables $A$ and $B$ and provide sample complexity guarantees for it. As a result of space constraints, some proofs are deferred to Appendix \ref{apx:proofs}.

\subsection{Related Work} 

%

Learning the causal structure between variables of a system has been the focus of intense research for decades \citep{chickering1997bayesian, spirtes2000causation, glymour2019review, scanagatta2019survey, gong2023causal}.
Given the difficulty of the problem, these approaches, for the most part, assume ideal conditions, such as access to an infinite number of samples, which is rarely met in real-world scenarios.
In recent years, attention has shifted toward the challenge of \textit{finite sample complexity} in causal discovery.
Several works have empirically studied the impact of sample size on the accuracy of causal inference \citep{eberhardt2010combining,mooij2016distinguishing, yang2018characterizing}.
From the theoretical standpoint, \citet{compton2022entropic} provided formal finite-sample guarantees for two-variable systems under assumptions of causal sufficiency and an assumption on the entropy of the exogenous variable.
\cite{wadhwa2021sample} studied the sample complexity of causal discovery for a network of \textit{discrete} variables by integrating finite-sample conditional independence tests, proposed in \citep{canonne2018testing} into the causal framework of \citep{pearl1995theory}.
This result does not extend to the \textit{continuous setting}.
Recent work by \citet{acharya2023sample} characterized the sample complexity required to distinguish cause from effect in bivariate one-dimensional \textit{discrete} settings, with a success probability of at least $2/3$. 
They showed that the necessary number of interventional samples depends on the domain size and characterized the trade-off between observational and interventional data.
Their work builds on the techniques developed by \citet{diakonikolas2021optimal},  who focused on testing the closeness of \textit{discrete} distributions with high probability, optimizing sample complexity as a function of parameters such as the error probability and domain size.
While these studies offer valuable insights, they do not apply to the continuous setting of our interest. Indeed, causal discovery in \textit{continuous settings} remains less explored, particularly in the finite-sample setting. 

\paragraph{KL Divergence and Von Mises Estimators}
Over recent decades, considerable attention has focused on estimating KL divergence.
Many of the existing methods are plug-in methods, i.e., they estimate the densities and evaluate the KL divergence functional based on these estimates.
\cite{singh2014exponential} 
among others established convergence rates for plug-in estimators for KL divergence.
In practice, evaluating the KL divergence numerically through its plug-in estimator becomes increasingly computationally expensive as the dimensionality of the variables grows. These estimators also suffer from slow convergence rates.
Another simple yet effective method for the estimation of KL divergence is the $k$-nearest neighbors (kNN) based method \citep[see e.g.,]{wang2009divergence, poczos2011estimation} although most of the work in this literature lacked convergence rate analyses.
In this context, \cite{noshad2017direct} provided convergence rate guarantees for a method of estimating Rényi and $f$-divergence measures via a graph-theoretic approach using kNN on joint data $(A, B)$.
However, both \citep{singh2014exponential} and \citep{noshad2017direct} have slower convergence rates and require stronger smoothness conditions than our approach in order to achieve similar convergence rates, a point we shall further discuss in the upcoming Section \ref{sec: concentration kl}.
More recently, \cite{zhao2020minimax} 
among others studied the sample complexity of kNN-based estimators and 
showed that they are asymptotically optimal under different assumptions than ours, such as weak tail distribution conditions.
Plug-in and kNN methods require undersmoothing the density estimate to achieve the best rate, and this smoothing parameter is in general unknown \citep{kandasamy2015nonparametric}. 

The Von Mises estimators have become a valuable tool for estimating statistical functionals, such as entropy,  mutual information, and divergence measures under nonparametric assumptions. 
These estimators, designed using the theory of influence functions and semi-parametric estimation \citep{fernholz2012mises, vd1998asymptotic}, are comprehensively studied for functionals of a single probability distribution (such as entropy).
\citet{kandasamy2015nonparametric} proposed and analyzed estimators for functionals of two densities (such as KL divergence) based on the Von Mises expansion.
This approach, previously applied in semiparametric settings \citep{birge1995estimation, robins2009quadratic}, corrects for the first-order bias terms in estimation, resulting in faster convergence rates.
Recently, building on this, \citet{jamshidi2023sample} applied the nonparametric Von Mises estimator to estimate mutual information specifically to conditional independence (CI) testing, the core component of constraint-based causal discovery algorithms.
Their work focused on recovering causal graphs up to the Markov equivalence class using only observational data.
However, their framework is restricted to independence testing based on mutual information estimates with access to joint densities. This limitation prevents its application in our setting, where we need to test the \emph{closeness} of two distributions using samples gathered under distinct conditions (e.g., either observational and interventional samples or two different interventions).


\paragraph{Sample complexity in closeness testing and causal discovery}

A line of work has focused on the sample complexity of closeness testing for discrete distributions to determine whether two distributions are identical or different.
\citet{diakonikolas2021optimal} studied the sample complexity of distinguishing if two discrete distributions (with a constant support size) are identical or their total variation distance is greater than $\epsilon$ with probability at least $1-\delta$ for given parameters $\epsilon,\delta$  and provided sample-optimal algorithms for this task.

In testing the closeness of two discrete distributions, sample complexity is largely dictated by the size of the domain.
Testing for continuous distributions poses a significant challenge due to infinite domain size.
In particular, if $p$ and $q$ are arbitrary continuous distributions, it is theoretically impossible to develop a finite-sample closeness tester with a constant probability of success \citep{batu2001testing}.
To address this challenge, two main approaches are commonly used in the literature.
The first involves imposing structural assumptions on $p$ and $q$. 
For instance, significant research has focused on closeness testing under Gaussianity and linearity assumptions \citep{diakonikolas2023gaussian, ingster2012nonparametric, verzelen2010goodness}.
The second approach is to approximate the closeness measure through techniques such as discretizing continuous densities to achieve a finite domain size. 
For instance, the $\mathcal{A}_k$ distance measures the maximum $l_1$-distance between the reduced distributions derived from $p$ and $q$ over all partitions of the domain into at most $k$ intervals.
\citet{diakonikolas2015optimal} proposed a sample-optimal algorithm for closeness testing within univariate distribution families based on $\mathcal{A}_k$ distance. 
Following this, \citet{diakonikolas2024testing} developed a closeness tester for multidimensional distributions, establishing upper and nearly-matching lower bounds for the sample complexity using tools from the Ramsey theory.
In this work, we will follow the first approach by imposing smoothness assumptions on the densities.

\section{Background}\label{sec: background}
We will use kernel density estimators (KDE) throughout this work.
For clarity, we include the formal definitions and properties of KDEs in Appendix \ref{apx:kernel}.
We refer the interested reader to \citep{terrell1992variable, chen2017tutorial} for further details.
Here, we begin by reviewing a well-known exponential concentration result for these estimators.
Building on this, we shall establish an exponential concentration inequality for a Von Mises estimator of KL divergence in the next section.

Before stating the concentration result, let us define a relevant notion, namely, the Hölder class, which is frequently used in the non-parametric estimation literature.
For a tuple \( \bs = (s_1, \ldots, s_d) \) of non-negative integers, we define \( |\bs| = \sum_{i=1}^{d} s_i \) and let the operator \( D^{\bs} \) denote
\(
D^{\bs} := \frac{\partial^{|\bs|}}{\partial x_1^{s_1} \cdots \partial x_d^{s_d}}.
\)
\begin{definition}[Hölder class]\label{def:holder}
    For $L>0$ and $\beta>0$, 
    the \emph{Hölder class} $\Sigma(\beta,L)$ on $\cX\subseteq\mathbb{R}^d$ is the set of functions $f:\cX\to\mathbb{R}$ that are $\lfloor\beta\rfloor $ times differentiable and satisfy
    \[
        |D^{\bs}f(x)-D^{\bs}f(y)|\leq L\|x-y\|^{\beta-|\bs|}
    \]
    for all $\bs = (s_1, \ldots, s_d)$ such that $|\bs|\leq\lfloor\beta\rfloor$.
    A function $f$ is called \emph{$\beta$-Hölder smooth} if $f \in \Sigma(\beta,L)$ for some $L>0$.
\end{definition}
Let $P$ be a probability measure on a compact space $\cX$ that is absolutely continuous with respect to the Lebesgue measure, and let $p$ denote its density function.
Let $K_d: \dR^d \to \dR$ be a kernel and let $\hat{p}_h$ be the corresponding kernel density estimator with bandwidth $h$ (see Eq.~\ref{eq:def_ph}.)
Under standard assumptions extensively studied in the literature (for instance, Assumption \ref{ass:kernel} in Appendix \ref{apx:kernel}), $\hat{p}_h$ is guaranteed to achieve minimax optimal rates of convergence to $p$.
Assumption \ref{ass:kernel} 
results in optimal convergence rates, specifically outlined below. 
The proof of this result follows from standard bias analysis and results of \citep{rinaldo2010generalized} -- see \citep{jamshidi2023sample} for instance.

\begin{restatable}[Exponential concentration of $\| p -\what{p}_h\|_{\infty}$]{proposition}{thmexpprob}\label{prp:conc_sup}
    Assume that $p$ belongs to the \emph{Hölder class} $\Sigma(\beta,L)$  on $\cX$ for some $\beta,L>0$ and that $K_d$ satisfies Assumption \ref{ass:kernel}.
   Let $h = h_n = \Theta(n^{-\frac{1}{2\beta + d}})$. Then, there exist $C_1, C_2, \eps_0>0$ and $n_0 \geq 0$ such that for all $ n^{-\frac{\beta}{2\beta + d}} (\log n)^{1/2} \leq \eps_n \leq \eps_0$:
    \begin{equation*}
       \forall n \geq n_0, \;  \dP\left(\|p-\what{p}_h\|_{\infty} > \eps_n \right) \leq C_1 \exp(- C_2 n^{\frac{2 \beta}{2 \beta + d}} \eps_n^2) \, .
    \end{equation*}
\end{restatable}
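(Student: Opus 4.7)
The plan is to proceed by the classical bias--variance decomposition
\[
\|p - \what{p}_h\|_\infty \leq \|p - \dE[\what{p}_h]\|_\infty + \|\what{p}_h - \dE[\what{p}_h]\|_\infty,
\]
and to control the two terms separately. For the deterministic \emph{bias} term, I would Taylor-expand $p$ at each $x \in \cX$ up to order $\lfloor \beta \rfloor$ and exploit the vanishing-moment conditions on $K_d$ implied by Assumption~\ref{ass:kernel}. The Hölder assumption $p \in \Sigma(\beta,L)$ controls the remainder, and after integrating against the kernel one obtains a uniform bound of the form $\|p - \dE[\what{p}_h]\|_\infty \leq C_b h^{\beta}$. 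With the prescribed bandwidth $h_n = \Theta(n^{-1/(2\beta+d)})$ this is $O(n^{-\beta/(2\beta+d)})$, which, under the stated lower bound on $\eps_n$, is eventually smaller than $\eps_n/2$.

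For the \emph{stochastic} term I would invoke the sup-norm concentration of \citep{rinaldo2010generalized}, which applies Talagrand's empirical-process inequality to the function class $\{K_d((\cdot - x)/h) : x \in \cX\}$; the VC property and uniform boundedness of this class are precisely what Assumption~\ref{ass:kernel} ensures. This yields a bound of the form
\[
\dP\bigl( \|\what{p}_h - \dE[\what{p}_h]\|_\infty > \eta \bigr) \leq C_1' \exp\bigl(-C_2'\, n h^d \eta^2\bigr)
\]
for $\eta$ above a threshold of order $\sqrt{\log n / (n h^d)}$. With $h = n^{-1/(2\beta+d)}$ one has $n h^d = n^{2\beta/(2\beta+d)}$, so the threshold becomes $(\log n)^{1/2}\, n^{-\beta/(2\beta+d)}$, which is exactly the lower bound imposed on $\eps_n$; the upper bound $\eps_n \leq \eps_0$ keeps us in the sub-Gaussian regime of the Talagrand inequality where the quadratic term in $\eta$ dominates the linear one.

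Combining these two estimates via the triangle inequality, for $n \geq n_0$ the event $\|p - \what{p}_h\|_\infty > \eps_n$ forces $\|\what{p}_h - \dE[\what{p}_h]\|_\infty > \eps_n / 2$, and applying the stochastic bound with $\eta = \eps_n / 2$ finishes the argument after absorbing the factor $1/4$ into $C_2$. The main technical obstacle is the sup-norm stochastic concentration itself: a pointwise Bernstein inequality followed by a union bound over a net of $\cX$ does not recover the correct rate, and one genuinely needs an empirical-process inequality on the VC class of kernel translates. Once that tool is in hand (via \citep{rinaldo2010generalized}), the remainder of the proof reduces to careful bookkeeping of the bandwidth scalings.
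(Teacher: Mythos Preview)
Your proposal is correct and matches the paper's approach exactly: the paper does not give a detailed proof but states that the result ``follows from standard bias analysis and results of \citep{rinaldo2010generalized},'' which is precisely your bias--variance decomposition with the H\"older/vanishing-moments bias bound and the Talagrand-type sup-norm concentration for the stochastic term. Your bookkeeping of the bandwidth scalings and the role of the lower and upper bounds on $\eps_n$ is also accurate.
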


\subsection{Causal preliminaries}
We use structural causal models (SCMs) as the semantic framework of our work \citep{pearl2009causality}.
In this framework, causal relationships between variables can be described through deterministic functions and independent noise terms. For instance, for the three structures in~Figure~\ref{fig: 3cases}, the following hold. If the causal structure is $A \rightarrow B$, the variables are generated by the SCM given as:
$A := N_A, \quad B := f_B(A, N_B)$
where $N_A$ and $N_B$ are independent noise variables, and $f_B$ is a deterministic function.
The reverse structure, $B \rightarrow A$, can similarly be conceptualized as:
$B := N_B, \quad A := f_A(B, N_A)$.
In the presence of a hidden confounder $U$, causing both $A$ and $B$, the model can be described as:
$U := N_U, \quad A := f_A(U, N_A), \quad B := f_B(U, N_B)$
where $N_U$, $N_A$, and $N_B$ are independent noise terms, with $U$ as the latent variable inducing the correlation between $A$ and $B$. 

An \emph{intervention} on the variable $A$ refers to setting $A$ to a specific value $a$, overriding the natural value it would have taken.
This can be conceptualized as a modified SCM whereby $f_A(\cdot)$ is replaced by a constant function outputting the value $a$.
We will use Pearl's \emph{do} notation to represent interventions.
In particular, $P(B|do(A=a))$ represents the probability distribution induced over the variable $B$ in a modified SCM where the value of $A$ is fixed at $a$.
We will sometimes use the shorthand $P_a(B)$ when clear from context.
Analogously, $P_b(A)=P(A|do(B=b))$ represents the interventional distribution of $A$ under an intervention setting the value of $B$ to $b$.
\section{Exponential Concentration for KL Divergence Estimation}\label{sec: concentration kl}
Here, we present our first main result: the exponential concentration bound for a KL divergence estimator based on Von Mises expansion.
Let $P$ and $Q$ be two probability measures on a compact set $\cX\subseteq\dR^d$ that are absolutely continuous with respect to the Lebesgue measure, with continuous densities $p$ and $q$.
The KL divergence between $p$ and $q$ is defined as

\begin{equation}\label{eq: kl}
    \kl(p \| q) \coloneqq \mathbb{E}_P\big[\log \frac{p(x)}{q(x)}\big]=\int_{\cX} p(x) \log \frac{p(x)}{q(x)} \, dx,
\end{equation}
and  $\kl(p\|q)=0$ if and only if $p=q$ almost everywhere.
One common approach to estimating $\kl(p\|q)$ is to first estimate the distributions and then plug them into the above integral, formally:
\begin{equation*}\label{eq: plug}
    \hat{D}_{\scriptscriptstyle \text{KL}}^{ \text{plug-in}}
    (\hat{p}\|\hat{q}) 
    = \int_{\cX} \hat{p}(x) \log \frac{\hat{p}(x)}{\hat{q}(x)} \, dx,
\end{equation*}
where $\hat{p}$ and $\hat{q}$ denote the estimates of $p$ and $q$, respectively. 
This method, although intuitive, becomes computationally intractable 
especially in higher dimensions $d$.
Moreover, it results in slow convergence rates -- see Remark \ref{rem:rates}.
To address these issues, we will use an estimator based on the Von Mises expansion of KL divergence, which corrects for the first-order bias term, and, as we shall see, exhibits faster convergence rates.
Suppose we have $n$ and $m$ i.i.d. samples, $\{x_1^{i}\}_{1 \leq i \leq n} $ and $\{x_2^{j}\}_{1 \leq j \leq m} $ from $p$ and $q$, respectively.
We will use an estimation procedure based on data split as follows.
The dataset is divided into two subsets.
Density estimates $\hat{p}$ and $\hat{q}$ are constructed using the first subset $\{x_1^i\}_{i=1}^{n/2}$ and $\{x_2^j\}_{j=1}^{m/2}$, respectively.
These density estimates are plugged into the following Von Mises estimator using the second half of the data:
\begin{equation}\label{eq: vm}
        \klv(\hat{p}\|\hat{q}) =\Big(\frac{2}{n} \sum_{i=n/2+1}^n \log \frac{\hat{p}(x_1^i)}{\hat{q}(x_1^i)}\Big) + \Big(1-\frac{2}{m} \sum_{j=m/2+1}^m \frac{\hat{p}(x_2^j)}{\hat{q}(x_2^j)}\Big).
\end{equation}
Note that $\klv(p\|q)$ in Eq.~\eqref{eq: vm} can be read off as the sample analogue of 
\begin{equation}\label{eq: intvm}
   \int_\cX p(x)\log\frac{\hat{p}(x)}{\hat{q}(x)}dx+1-\int_\cX q(x)\frac{\hat{p}(x)}{\hat{q}(x)}dx ,
\end{equation}
which is equal to $\kl(p\|q)$ when $\hat{p}=p$ and $\hat{q}=q$.
In Appendix \ref{apx:vonmises}, we explain in detail the derivation of the estimator in Eq.~\eqref{eq: vm} based on the theory of influence functions \citep{newey1990semiparametric, ichimura2022influence}, and show that this estimator corrects for the first-order bias terms in the estimation of KL divergence.
In particular, using the Von Mises expansion of $\kl$ at $(p,q)$,
\begin{equation}
    \begin{aligned}
            \kl (p \| q) = & \kl(\hat{p} \| \hat{q} ) + \int \left( -\kl(\hat{p} \| \hat{q})+log\frac{\hat{p}(x)}{\hat{q} (x)} \right) p(x) \, dx + \int \left( 1- \frac{\hat{p}(x)}{\hat{q}(x)} \right ) q(x) \, dx \\
             &+ \mathcal{O} (\| p-\hat{p} \|_2^2) + \mathcal{O} (\| q -\hat{q} \|_2^2)\\
             =& \int p(x) \log \frac{\hat{p}(x)}{\hat{q}(x)} \, dx +1 - \int q(x) \frac{\hat{p}(x)}{\hat{q}(x)} \, dx  + \mathcal{O} (\| p-\hat{p} \|_2^2) + \mathcal{O} (\| q-\hat{q}\|_2^2),
    \end{aligned}
\end{equation}
which motivates the estimation of $\kl(p \| q)$ based on the form given in Eq. \eqref{eq: vm},
the sample analogue of Eq.~\eqref{eq: intvm}.
In what follows, we will use kernel density estimators $\hat{p}=\hat{p}_h$, and $\hat{q}=\hat{q}_h$.
To ensure that estimation is feasible, we require certain smoothness assumptions on the densities as follows.

\begin{assumption}[Assumptions on the densities $p$ and $q$.]\label{ass:density}
Densities $p$ and $q$ belong to the \emph{Hölder classes} $\Sigma(\beta_p,L)$ and $\Sigma(\beta_q,L)$, respectively, for some $L>0$.
Moreover, $p$ and $q$ are lower-bounded on $\cX$ by some $p_{\min} >0$ and $q_{\min} >0$.


\end{assumption}


\begin{remark}
    Next theorem shows that assuming both \( p \) and \( q \) are bounded below by some positive constant \( p_{\min} \) is essential to ensuring that the estimator \(\klv\) achieves an exponential convergence rate.
    When dealing with densities that do not meet this lower bound naturally, a practical solution is to truncate \( p \) and \( q \) on a sufficiently large compact interval, keeping the KL divergence close to its true value.
\end{remark}

\begin{restatable}[Exponential concentration of $\klv$ in Eq.~\ref{eq: vm}]{theorem}{thmexpkl}\label{th: con_kl}
Suppose that $K_d$ satisfies Assumption \ref{ass:kernel} and that densities $p$ and $q$ both meet Assumption \ref{ass:density}.
Let the bandwidth of the kernel estimates for densities $p$ and $q$ equal $h_p(n)= \Theta(n^{-\frac{1}{2\beta_p+d}})$ and $h_q(m)= \Theta(m^{-\frac{1}{2\beta_q+d}})$, respectively.
Then there exist positive constants $n_0,m_0$, $\{C_i, C'_i\}_{1 \leq i \leq 4}$ and $\epsilon_0>0$ such that for any $n>n_0$, $m>m_0$, and\\
\(
\max\{
n^{-2\beta_p/(2\beta_p+d)}\log n, n^{-1/2}, m^{-2\beta_q/(2\beta_q+d)}\log m, m^{-1/2}
\}
\leq \epsilon \leq \epsilon_0
\)
such that

\begin{equation}
    \begin{aligned}
         Pr \Big( \left|  \klv(\hat{p}_{h_p}\|\hat{q}_{h_q}) - \kl(p \| q)\right | &> \epsilon \Big) \leq   C_1 \exp \Big( -C'_1 n^{1/2} \epsilon \Big) + C_2 \exp \Big( -C'_2 m^{1/2} \epsilon \Big)\\
         & + C_3 \exp \Big( -C'_3 n^{\frac{2\beta_p}{2 \beta_p +d}} \epsilon \Big) + C_4 \exp \Big( -C'_4 m^{\frac{2\beta_q}{2 \beta_q +d}} \epsilon \Big).
    \end{aligned}
\end{equation}
\end{restatable}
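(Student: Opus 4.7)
The plan is to decompose the error $\klv(\hat{p}_{h_p}\|\hat{q}_{h_q}) - \kl(p\|q)$ into a \emph{stochastic} part (sample averages versus their conditional expectations given the first half of the data) and a \emph{deterministic/bias} part (a remainder from the Von Mises expansion). Concretely, conditional on the first split, let
\[
\mu_p := \int p(x)\log\tfrac{\hat{p}(x)}{\hat{q}(x)}\,dx, \qquad \mu_q := \int q(x)\tfrac{\hat{p}(x)}{\hat{q}(x)}\,dx,
\]
so that $\klv(\hat{p}\|\hat{q})$ is the sample analogue of $\mu_p+1-\mu_q$. The Von Mises expansion recalled between Eq.~\eqref{eq: intvm} and the statement gives $\kl(p\|q) = \mu_p + 1 - \mu_q + \mathcal{O}(\|p-\hat{p}\|_2^2) + \mathcal{O}(\|q-\hat{q}\|_2^2)$. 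A triangle inequality then splits the deviation into four pieces: the two sample-average deviations from $\mu_p$ and $\mu_q$, and the two squared $L^2$-errors of the KDEs. Each piece will contribute one of the four exponential terms in the final bound, after a union bound.

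For the two stochastic pieces, I would condition on the first half of the data, which fixes $\hat{p}_{h_p}$ and $\hat{q}_{h_q}$, and then apply a Hoeffding-type bound to the averages $\frac{2}{n}\sum_i \log(\hat{p}/\hat{q})(x_1^i)$ and $\frac{2}{m}\sum_j (\hat{p}/\hat{q})(x_2^j)$. This requires boundedness of the summands. Using Assumption~\ref{ass:density} and Proposition~\ref{prp:conc_sup}, on the event $\{\|p-\hat{p}_{h_p}\|_\infty \le p_{\min}/2\}\cap\{\|q-\hat{q}_{h_q}\|_\infty\le q_{\min}/2\}$ the estimates are bounded away from $0$ (and from above by compactness of $\cX$ plus the Hölder class), so $\log(\hat{p}/\hat{q})$ and $\hat{p}/\hat{q}$ lie in a deterministic bounded range. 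On this high-probability event Hoeffding (or Bernstein) applied to the second half yields $\exp(-C_1' n^{1/2}\epsilon)$ and $\exp(-C_2' m^{1/2}\epsilon)$ after absorbing $\epsilon$ vs $\epsilon^2$ using the lower bound on $\epsilon$ in the hypothesis. The complement of the conditioning event contributes terms that Proposition~\ref{prp:conc_sup} controls by exponentials that are subsumed into the non-parametric exponents below.

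For the deterministic pieces, I would bound $\|p-\hat{p}_{h_p}\|_2^2 \le \mathrm{Vol}(\cX)\,\|p-\hat{p}_{h_p}\|_\infty^2$ (and analogously for $q$), so that the event $\{\|p-\hat{p}_{h_p}\|_2^2 > \epsilon\}$ is contained in $\{\|p-\hat{p}_{h_p}\|_\infty > c\sqrt{\epsilon}\}$ for some constant $c$. Applying Proposition~\ref{prp:conc_sup} with $\eps_n = c\sqrt{\epsilon}$ (which is admissible in the prescribed $\epsilon$-range since $\epsilon \ge n^{-2\beta_p/(2\beta_p+d)}\log n$ forces $\sqrt{\epsilon}$ above the required threshold $n^{-\beta_p/(2\beta_p+d)}(\log n)^{1/2}$) yields the terms $C_3\exp(-C_3' n^{2\beta_p/(2\beta_p+d)}\epsilon)$ and $C_4\exp(-C_4' m^{2\beta_q/(2\beta_q+d)}\epsilon)$. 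Handling the remainder of the Von Mises expansion also uses that on the conditioning event the ratio $\hat{p}/\hat{q}$ is bounded, so the implicit constants in the $\mathcal{O}(\cdot)$ are uniform in $n,m$.

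The main obstacle I anticipate is the careful bookkeeping around the Von Mises remainder: one must verify that the second-order remainder is genuinely $\mathcal{O}(\|p-\hat{p}\|_2^2)+\mathcal{O}(\|q-\hat{q}\|_2^2)$ with constants depending only on $p_{\min}, q_{\min}$ and $\|p\|_\infty,\|q\|_\infty$, which in turn requires that the densities stay bounded away from zero uniformly on a high-probability event. This is where Assumption~\ref{ass:density} and the uniform KDE concentration of Proposition~\ref{prp:conc_sup} must be combined carefully, and why the lower bound imposed on $\epsilon$ in the hypothesis exactly matches the non-parametric rates of the KDE. Once this uniform boundedness is in hand, everything else is a Hoeffding-plus-union-bound exercise producing the four exponential terms displayed in the theorem.
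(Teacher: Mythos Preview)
Your proposal is correct and follows essentially the same route as the paper: decompose $\klv-\kl$ via the Von Mises expansion into the two sample-average deviations plus the two squared $L^2$ KDE errors, bound the former by Hoeffding (using boundedness of $\hat p/\hat q$ and $\log(\hat p/\hat q)$ once $\hat p,\hat q$ are bounded away from zero) and the latter by passing to $\|\cdot\|_\infty$ and invoking Proposition~\ref{prp:conc_sup}, then combine with a union bound. If anything, your treatment of the event $\{\hat p>p_{\min}/2,\ \hat q>q_{\min}/2\}$ as a high-probability conditioning event is more careful than the paper's, which simply asserts this holds ``almost surely'' for large $n,m$.
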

The proof of Theorem \ref{th: con_kl} is given in Appendix \ref{apx:proofs}.
\begin{remark}
  When the smoothness parameters satisfy $\beta_p,\beta_q > \frac{d}{2}$, the estimator $\klv$ achieves the optimal parametric convergence rate of $\mathcal{O}(n^{-1/2}+m^{-1/2})$ which is the best possible rate \citep{birge1995estimation, laurent1996efficient}.
\end{remark}
\begin{remark}
    In the remainder of the paper, we assume $\beta_p=\beta_q=\beta$ to simplify the presentation.
\end{remark}
\begin{remark}\label{rem:rates}
    From Theorem \ref{th: con_kl}, we directly obtain the well-established convergence rate $\mathcal{O}(n^{-\lambda}+m^{-\lambda})$ where $\lambda = \min\{\frac{1}{2}, \frac{2\beta}{2\beta+d}\}$ for the Von Mises KL divergence estimator \citep[see][]{kandasamy2015nonparametric}. 
    In comparison, \citet{singh2014exponential} and \citet{noshad2017direct} obtain the slower rate of $\mathcal{O}(n^{-\min \{{\frac{\beta}{\beta + d}}, 1/2 \}})$ for $n$ samples from $p, q$.
\end{remark}

\subsection{Closeness Testing of Two Continuous Distributions}\label{sec:hypothesis}
In this section, we use Theorem \ref{th: con_kl} to devise a test to distinguish between the following two hypotheses 
\begin{equation*}
    H_0 := \kl(p \| q)=0 \mbox{ vs. } H_1 := \kl(p \| q) > \epsilon\, .
\end{equation*}
We will use the Von Mises estimator of Eq.~\eqref{eq: vm} to estimate $\kl(p\|q)$ given $n$ and $m$ samples from $p$ and $q$, respectively, and use the following test criterion:
\begin{equation}\label{eq:cor:kl}
        CT_{\mathrm{VM}}(n,m,\epsilon) := \begin{cases}
            H_1 \quad \mbox{if } \klv(\hat{p}\|\hat{q}) > \epsilon/2, \\
            H_0 \quad \mbox{otherwise. }
        \end{cases}
    \end{equation}
The following corollary is straightforward to verify using Theorem \ref{th: con_kl}.
\begin{corollary}[Sample Complexity of Closeness Testing]
    Under the conditions in Theorem \ref{th: con_kl}, given $n$ and $m$ i.i.d. samples drawn from $p$ and $q$, respectively, $CT_{\mathrm{VM}}(n,m,\epsilon)$ distinguishes correctly between $p=q$ and $\kl( p \| q)> \epsilon$ with probability at least $1-\delta$ where $\epsilon$ and $\delta$ are positive constants as long as:
    $$n,m = \Omega \left( (\frac{1}{\epsilon} \log \frac{1}{\delta})^\tau \right), $$ 
    where $\tau = \max\{2, \frac{2 \beta+d}{2 \beta}\}$.
\end{corollary}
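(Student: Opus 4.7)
The plan is to reduce the test's error probability directly to Theorem~\ref{th: con_kl}. Under $H_0$ we have $\kl(p\|q)=0$, so the test outputs $H_0$ correctly whenever $\klv(\hat p\|\hat q)\leq \epsilon/2$, which is implied by $|\klv-\kl(p\|q)|\leq \epsilon/2$. Under $H_1$, where $\kl(p\|q)>\epsilon$, the test outputs $H_1$ correctly whenever $\klv(\hat p\|\hat q)>\epsilon/2$; since $\kl(p\|q)-\klv < \epsilon/2$ implies $\klv > \epsilon - \epsilon/2 = \epsilon/2$, the same deviation bound $|\klv-\kl(p\|q)|\leq \epsilon/2$ is sufficient in both cases.

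Next, I would invoke Theorem~\ref{th: con_kl} with $\epsilon/2$ in place of $\epsilon$, which produces a four-term exponential bound with decay rates $n^{1/2}\epsilon$, $m^{1/2}\epsilon$, $n^{2\beta/(2\beta+d)}\epsilon$, and $m^{2\beta/(2\beta+d)}\epsilon$ (absorbing the factor $1/2$ into the constants $C'_i$). I would then require each of the four terms to be at most $\delta/4$ so that the union bound gives total failure probability at most $\delta$.

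Solving each inequality for the sample sizes is then a routine inversion. The first two terms force $n,m = \Omega\bigl((\tfrac{1}{\epsilon}\log\tfrac{1}{\delta})^2\bigr)$, while the last two force $n,m = \Omega\bigl((\tfrac{1}{\epsilon}\log\tfrac{1}{\delta})^{(2\beta+d)/(2\beta)}\bigr)$. Taking the maximum over the two polynomial orders yields the exponent $\tau=\max\{2,(2\beta+d)/(2\beta)\}$ stated in the corollary.

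The only subtlety to verify is that the choice $\epsilon/2$ stays within the admissible window of Theorem~\ref{th: con_kl}, namely above the floor $\max\{n^{-2\beta/(2\beta+d)}\log n,\,n^{-1/2},\,m^{-2\beta/(2\beta+d)}\log m,\,m^{-1/2}\}$ and below $\epsilon_0$. Since $\epsilon$ is treated as a positive constant in the corollary, the upper bound is satisfied for sufficiently small $\epsilon$; and the lower envelope is automatically met once $n,m$ scale as the prescribed polynomial in $1/\epsilon$ and $\log(1/\delta)$, so no further work is needed beyond the inversion above.
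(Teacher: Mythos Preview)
Your proposal is correct and follows exactly the approach the paper intends: the paper does not give an explicit proof, stating only that the corollary ``is straightforward to verify using Theorem~\ref{th: con_kl},'' and your argument---reducing both error cases to the event $|\klv-\kl(p\|q)|>\epsilon/2$, bounding each of the four exponential terms by $\delta/4$, and inverting---is precisely that straightforward verification.
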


\section{Determining Causal Relationships}\label{sec: discovery}
We now turn our focus to the problem of identifying the causal structure between two correlated continuous $d$-dimensional random variables $A$ and $B$ using both observational and interventional data.
In Subsection \ref{subsec:interv}, we consider the case where only interventional data is accessible.
We formalize the correlation between $A$ and $B$ as follows.
\begin{assumption}\label{as:cor}
    $A$ and $B$ are correlated. Formally,
    $\kl \left(P(A, B) \| P(A) P(B) \right) > \epsilon$ for some $\epsilon>0$.
\end{assumption}
Our goal is to determine whether the relationship between $A$ and $B$ is that of direct causation, that is $A \to B$ or $B \to A$, or an unobserved confounder $U$ influences both variables, resulting in the structure $A \gets U \to B$ (See Figure \ref{fig: 3cases}). 
We will determine the causal structure
by examining changes in the distributions under interventions.
We explain this below.

Suppose the true causal relationship is $A\to B$.
In this case, the interventional distribution $P_a(B) :=P(B|do(A=a))$ matches $P(B|A=a)$ for every $a$.
To test for the latter, we can use interventional samples drawn from $P_a(B)$ and observational samples drawn from $P(A, B)$ to estimate the densities and use the hypothesis testing procedure outlined in Section \ref{sec:hypothesis}.
On the other hand, if there is no causal edge from $A$ to $B$, i.e., $A\not\to B$, then $P_a(B)$ coincides with $P(B)$ for every $a$, which can be tested similarly.
In case our analysis does not imply $A\to B$, we can analogously test for $B\to A$ by estimating the KL divergence between $P_b(A)$ and $P(A|B=b)$.
If neither direction is implied by these tests, we conclude that the relationship is likely $A \gets U \to B$, where $U$ is an unobserved variable causing both $A$ and $B$.
To present our formal study, we will make use of two preliminary lemmas:

\begin{restatable}{lemma}{lemkl}\label{lem:expkl}
    Under Assumption \ref{as:cor}, 
    \[
        \mathbb{E}_{A} \left[\kl \left(P (B|A) \| P(B) \right) \right] > \epsilon \quad \text{and} \quad \mathbb{E}_{B} \left[\kl \left(P(A|B) \| P(A) \right) \right] > \epsilon.
    \]
\end{restatable}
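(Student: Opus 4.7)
The plan is to recognize that the quantity in Assumption \ref{as:cor}, namely $\kl(P(A,B)\|P(A)P(B))$, is precisely the mutual information $I(A;B)$, and that the two expressions appearing in the lemma are simply the two standard decompositions of mutual information as an expected KL divergence between a conditional and a marginal. So the proof is essentially an unfolding of definitions followed by an application of Fubini.

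Concretely, I would first write out the definition:
\[
\kl\bigl(P(A,B)\,\|\,P(A)P(B)\bigr) \;=\; \int\!\!\int p(a,b)\,\log\frac{p(a,b)}{p(a)\,p(b)}\,da\,db.
\]
Using the factorization $p(a,b) = p(a)\,p(b\mid a)$, the log-ratio in the integrand simplifies to $\log\!\bigl(p(b\mid a)/p(b)\bigr)$. I would then apply Fubini to peel off the outer integral over $a$ weighted by $p(a)$, which yields
\[
\kl\bigl(P(A,B)\,\|\,P(A)P(B)\bigr) \;=\; \int p(a)\!\left[\int p(b\mid a)\,\log\frac{p(b\mid a)}{p(b)}\,db\right]\!da \;=\; \mathbb{E}_A\!\left[\kl\bigl(P(B\mid A)\,\|\,P(B)\bigr)\right].
\]
Combining this identity with Assumption~\ref{as:cor} gives $\mathbb{E}_A[\kl(P(B\mid A)\|P(B))] > \epsilon$, which is the first inequality.

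For the second inequality, I would repeat exactly the same argument but using the symmetric factorization $p(a,b) = p(b)\,p(a\mid b)$; this yields $\kl(P(A,B)\|P(A)P(B)) = \mathbb{E}_B[\kl(P(A\mid B)\|P(A))]$, and again invoking Assumption~\ref{as:cor} finishes the proof. There is no real obstacle here: the result is a textbook chain-rule-style identity for mutual information. The only mild care needed is the justification of Fubini, which is fine because the Hölder-smooth, lower-bounded density setting of Assumption~\ref{ass:density} guarantees finiteness of $\kl(P(A,B)\|P(A)P(B))$ (so the positive and negative parts of the log-ratio are separately integrable), and the inner KL integrands are themselves nonnegative.
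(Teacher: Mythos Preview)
Your proposal is correct and follows essentially the same approach as the paper: both write out the integral definition of $\kl(P(A,B)\|P(A)P(B))$, factor $p(a,b)=p(a)p(b\mid a)$ (and symmetrically $p(b)p(a\mid b)$), and recognize the result as the expected conditional KL divergence. Your version is slightly more explicit about the Fubini justification, but the argument is the same.
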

See Appendix \ref{apx:proofs} for the proof.
\begin{lemma}[\citealp{levin1985one}; see also Fact A.2 in \citealp{goldreich2014multiple}] \label{lemma: levin}
    Let $P$ be a probability measure, and let $h: \text{supp}(P) \to [0,1]$ be a function with $\mathbb{E} [h(t)] > \epsilon$ for some $\epsilon \in (0,1]$.
    Define $k = \lceil \log_2 \frac{2}{\epsilon} \rceil$, $\epsilon_j= 2^{-j}$, and $r_j=\frac{2^{j} \epsilon}{(k+5-j)^2}$.
    Then, there exists $j \in [k]$ such that $\Pr( h(t) > \epsilon_j) > r_j$.
\end{lemma}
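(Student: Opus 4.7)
The plan is to proceed by contradiction: assume that for every $j \in [k]$ one has $\Pr(h(t) > \epsilon_j) \leq r_j$, and derive from this a contradiction with the hypothesis $\mathbb{E}[h(t)] > \epsilon$ by exhibiting an upper bound on $\mathbb{E}[h(t)]$ that is strictly smaller than $\epsilon$.

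The main tool is the layer-cake representation $\mathbb{E}[h(t)] = \int_0^1 \Pr(h(t) > u)\, du$, which is available because $h$ takes values in $[0,1]$. I would split the interval of integration into three pieces: the low tail $(0,\epsilon_k]$, the dyadic annuli $(\epsilon_{j+1},\epsilon_j]$ for $j=1,\ldots,k-1$, and the top piece $(\epsilon_1,1]=(1/2,1]$. On the low tail I would use only the trivial bound $\Pr(h(t)>u)\leq 1$, which contributes at most $\epsilon_k=2^{-k}$; the definition $k=\lceil \log_2(2/\epsilon)\rceil$ forces $2^{-k}\leq \epsilon/2$, so this piece contributes at most $\epsilon/2$. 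On each annulus, monotonicity of the tail together with the contradiction hypothesis yields $\Pr(h(t)>u)\leq \Pr(h(t)>\epsilon_{j+1})\leq r_{j+1}$, and the interval has length $\epsilon_j-\epsilon_{j+1}=\epsilon_{j+1}$, so the contribution is at most $r_{j+1}\epsilon_{j+1}=\epsilon/(k+4-j)^2$. On the top piece the same bound gives $r_1\cdot(1/2)=\epsilon/(k+4)^2$.

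Adding all contributions and setting $m=k+4-j$, the total upper bound takes the clean form $\mathbb{E}[h(t)]\leq \epsilon/2 + \epsilon \sum_{m=5}^{k+4} m^{-2}$. I would then control the reciprocal-square sum by the telescoping comparison $\sum_{m=5}^{\infty} m^{-2} < \sum_{m=5}^{\infty} \frac{1}{m(m-1)} = \tfrac{1}{4}$, so that $\mathbb{E}[h(t)] < \epsilon/2 + \epsilon/4 = 3\epsilon/4 < \epsilon$, contradicting the assumption $\mathbb{E}[h(t)] > \epsilon$.

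The step that requires care is not conceptual but bookkeeping: the exponent $(k+5-j)^2$ in the definition of $r_j$ is tuned precisely so that, after the dyadic length $2^{-(j+1)}$ cancels the $2^{j+1}$ factor in $r_{j+1}$, the resulting indices of the reciprocal squares begin at $5$, which is exactly where $\sum m^{-2}$ is already dominated by the telescoping tail $1/4$ and thus fits inside the remaining $\epsilon/2$ budget left over from the low tail. The obstacle to watch for is off-by-one errors in indexing the annuli and in the inequality between $k$ and $\log_2(2/\epsilon)$, since the argument is tight enough that a slightly weaker exponent in $r_j$ would break the contradiction.
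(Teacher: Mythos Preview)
Your proof is correct. The paper does not supply its own proof of this lemma; it is quoted as a known fact from \citet{levin1985one} and \citet{goldreich2014multiple}, so there is nothing to compare against. Your layer-cake argument with the dyadic decomposition is the standard way to establish this averaging trick, and your bookkeeping is clean: the low tail contributes at most $2^{-k}\leq\epsilon/2$, each annulus $(\epsilon_{j+1},\epsilon_j]$ contributes $r_{j+1}\epsilon_{j+1}=\epsilon/(k+4-j)^2$, the top piece contributes $\epsilon/(k+4)^2$, and the telescoping bound $\sum_{m\geq 5}m^{-2}<1/4$ closes the contradiction.
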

We begin by outlining the procedure for testing whether the edge $A\to B$ exists below. 
\subsection{Testing for \texorpdfstring{$A\to B$}{A to B} Using Observational and Interventional Data}

If $A \not\to B$, then $A$ has no causal effect on $B$, and $P_a(B)=P(B)$ for every $a$.
Clearly, \[\mathbb{E}_{A} \left[\kl \left(P(B|do(A)) \| P(B) \right) \right]=0\] in this case.
Conversely if $A\to B$, i.e., $A$ causally affects $B$, then $P_a(B)=P(B|A=a)$ for every $a$ and hence from Lemma \ref{lem:expkl},
$\mathbb{E}_{A} \left[\kl \left(P (B|do(A)) \| P(B) \right) \right]>\epsilon$.
This brings us to the following criterion for testing the edge $A\to B$:
\begin{equation}\label{eq:test}
    \begin{cases}
        A\to B\quad\Leftrightarrow\quad \mathbb{E} \left[h(A)\right]>\epsilon, \\
        A\not\to B\quad\Leftrightarrow\quad\mathbb{E} \left[h(A) \right]=0,
    \end{cases}
\end{equation}
where
\begin{equation}\label{eq:h}h(a)\coloneqq\kl\left(P(B|do(A=a)\|P(B)\right).\end{equation}
In the sequel, we show how the criterion of Eq.~\eqref{eq:test} can be verified with high probability, using the hypothesis testing procedure of Section \ref{sec:hypothesis} and Lemma \ref{lemma: levin}.
Before moving forward, we note that Lemma \ref{lemma: levin} requires $h(\cdot)$ to take values in $[0,1]$. 
KL divergence is non-negative, but can be unbounded in general.
However, under Assumption \ref{ass:density}, KL divergence remains upper-bounded due to $p$ and $q$ being bounded away from zero.
Hence, Lemma \ref{lemma: levin} remains valid with an appropriate scaling of $h$ based on the constants $p_{min}$ and $q_{min}$, which does not affect our asymptotic analysis.

Let $k,\epsilon_j$, and $r_j$ be defined as in Lemma \ref{lemma: levin}.
If the edge $A\to B$ exists, then Lemma \ref{lemma: levin}
implies that there exists an index $j^* \in [k]$ for which:
\[
\Pr \left(\kl \left(P (B|a) \| P(B) \right) > \epsilon_{j^*} \right) > r_{j^*},
\]
since $\mathbb E[h(A)] > \epsilon$.
This implies the following lemma.
\begin{restatable}{lemma}{lemconstant}\label{lem:constantpr}
    Suppose the causal structure is $A\to B$, and let $k, \epsilon_j$, and $r_j$ be defined as in Lemma \ref{lemma: levin}.
    There exists an index $j^*\in[k]$ such that
    for any $c>0$ and 
    given $l_{j^*}=\frac{2c+2}{r_{j^*}}$ i.i.d. samples $\{a_i\}_{i=1}^{l_{j^*}}$ from $P(A)$, at least one of these samples satisfies $h(a_i)>r_{j^*}$ with probability $1 - e^{-c}$.
\end{restatable}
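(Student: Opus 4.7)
The plan is to combine Lemma~\ref{lem:expkl} with the structural assumption $A\to B$, invoke Lemma~\ref{lemma: levin} to produce the required index $j^*$, and then close with an elementary independence argument.

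First I would observe that under the causal structure $A\to B$ (with no hidden confounding on the edge), the interventional law reduces to the observational conditional, so $P(B\mid do(A=a))=P(B\mid A=a)$ for every $a$ in the support of $P(A)$. Consequently the function $h$ of Eq.~\eqref{eq:h} coincides with $\kl(P(B\mid A)\|P(B))$, and Lemma~\ref{lem:expkl} (whose hypothesis is supplied by Assumption~\ref{as:cor}) yields $\dE_A[h(A)]>\epsilon$. As noted in the paragraph preceding the lemma statement, Assumption~\ref{ass:density} together with the compactness of $\cX$ guarantees that $h$ is uniformly upper-bounded in terms of $p_{\min}$ and $q_{\min}$, so a single global multiplicative rescaling brings $h$ into $[0,1]$ without affecting the asymptotic constants. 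Applying Lemma~\ref{lemma: levin} to this rescaled $h$ then extracts an index $j^*\in[k]$ with $\Pr_A\!\left(h(A)>\epsilon_{j^*}\right)>r_{j^*}$.

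All that remains is an elementary $(1-x)^n\leq e^{-nx}$ bound. Drawing $l_{j^*}=(2c+2)/r_{j^*}$ i.i.d. samples $\{a_i\}_{i=1}^{l_{j^*}}$ from $P(A)$, independence gives that the probability that \emph{none} of them exceeds the threshold is at most
\[(1-r_{j^*})^{l_{j^*}}\;\leq\;\exp(-r_{j^*}\,l_{j^*})\;=\;e^{-(2c+2)}\;\leq\;e^{-c},\]
and taking complements delivers the claimed $1-e^{-c}$ lower bound on the probability that at least one $a_i$ exceeds the threshold provided by Lemma~\ref{lemma: levin}. The only substantive point I expect to require care is the rescaling of $h$ into $[0,1]$: one must verify that $\kl(P(B\mid A=a)\|P(B))$ admits a uniform-in-$a$ upper bound in terms of $p_{\min}$, $q_{\min}$, and the diameter of $\cX$, so that a single multiplicative constant suffices for all $a$. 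Once this is in place, chaining the two lemmas with the $(1-x)^n\leq e^{-nx}$ inequality is entirely routine and I do not anticipate any further obstacle.
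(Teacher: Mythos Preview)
Your argument is correct and in fact more elementary than the paper's. The paper establishes the same claim via a multiplicative Chernoff bound: it sets $\mathcal{E}_i=\mathbbm{1}\{h(a_i)>\epsilon_{j^*}\}$, $\mathcal{E}=\sum_i\mathcal{E}_i$, observes $\mu:=\dE[\mathcal{E}]>l_{j^*}r_{j^*}=2c+2$, and then applies $\Pr(\mathcal{E}\leq(1-\alpha)\mu)\leq e^{-\mu\alpha^2/2}$ with a carefully chosen $\alpha$ (involving an auxiliary constant $\gamma=2c+1-\sqrt{2c(2c+2)}$) to squeeze out exactly $e^{-c}$. Your route replaces this with the one-line bound $(1-r_{j^*})^{l_{j^*}}\leq e^{-r_{j^*}l_{j^*}}=e^{-(2c+2)}$, which is both shorter and strictly sharper than the paper's $e^{-c}$. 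The Chernoff machinery is genuinely unnecessary here because we only need the event $\{\mathcal{E}\geq 1\}$, not any concentration around the mean; your direct independence computation is the natural tool. The preliminary reduction (using $A\to B$ to identify $h$ with $\kl(P(B\mid A)\|P(B))$, invoking Lemma~\ref{lem:expkl}, rescaling into $[0,1]$, and applying Lemma~\ref{lemma: levin}) matches the paper's setup exactly.
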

The proof of this lemma, which is included in Appendix \ref{apx:proofs} for the sake of completeness, goes through an application of the Chernoff inequality.
Note that in the alternative case, i.e., $A\not\to B$, $h(a_i)$ is always zero.
Therefore, our algorithm tests whether there is an edge $A\to B$ as follows:
for each $j\in[k]$, we draw $l_j=\frac{c+2}{r_j}$ i.i.d. samples from $A$; for each sample $a_i$, we draw $n_j$ and $m_j$ samples from $p=P \left(B\do(A=a_i) \right)$ and $q=P(B)$, respectively;
we run the hypothesis test of Section \ref{sec:hypothesis} to test whether $h(a_i)>\epsilon_j$ or $h(a_i)=0$;
finally, if any of these tests return the hypothesis $H_1$ (i.e., $h(a_i)>\epsilon_j$) then we conclude the edge $A\to B$ exists; otherwise we conclude $A\not\to B$.
The pseudocode for this algorithm is presented as Algorithm \ref{alg:test}.

\begin{algorithm2e}[t]
\caption{Algorithm for Testing the Edge $A\to B$ Using Observational Data}
\label{alg:test}
\KwIn{ parameter $c$, smoothness $\beta$, threshold $\epsilon$, sample access to distributions $P(B)$ \& $P_A(B)$}
\KwOut{Determine whether the causal structure is $A \to B$}

Set params $\tau=\max\{2,\frac{2\beta+d}{2\beta}\}$, $k = \log \left( \frac{2}{\epsilon} \right)$, $\epsilon_j= 2^{-j}$, $r_j=\frac{2^j\epsilon}{(k+5-j)^2}$, $\delta_j = \frac{2 ^{j-k}}{(2c+2) (k+5-j)^4}$\;
Define sample sizes: $n_j = m_j = (\frac{1}{\epsilon_j}\log\frac{1}{\delta_j})^\tau$\;
\For{$j = 1$ \KwTo $k$}{
    \For{$i = 1$ \KwTo $\left\lfloor \frac{2c+2}{r_j} \right\rfloor$}{
        Draw sample $a_i \sim P(A)$\;
        \If{$CT_{VM}\left( n_j, m_j, \epsilon_j\right) = H_1$ for $p=P_{a_i}(B)$ and $q=P(B)$}{
            \Return $A \to B$
        }
    }
}
\Return $A \not\to B$\;
\end{algorithm2e}
The following result presents the sample complexity of our method.
\begin{restatable}{theorem}{thmsample}\label{thm:sample}
    Suppose Assumption \ref{as:cor} holds, kernel $K_d$ satisfies Assumption \ref{ass:kernel}, and that observational and interventional densities satisfy Assumption \ref{ass:density}.
    For any $c>0$, Algorithm \ref{alg:test} correctly distinguishes between $A\to B$ and $A\not\to B$ with probability $0.9(1-e^{-c})$, using $\mathcal{O}\big(\frac{c}{\epsilon^\tau}\big)$ observational and $\mathcal{O}\big(\frac{c}{\epsilon^\tau}\big)$ interventional samples, where $\tau=\max\{2,(2\beta +d)/2\beta\}$.
\end{restatable}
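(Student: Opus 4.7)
The plan is to decompose the correctness analysis by ground truth, bounding false positives when $A \not\to B$ and false negatives when $A \to B$. In both cases the argument chains together the closeness-testing corollary, Lemma \ref{lem:expkl}, Lemma \ref{lemma: levin}, and Lemma \ref{lem:constantpr}, each invoked with the parameter choices hard-coded in Algorithm \ref{alg:test}.

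For the null case $A \not\to B$, the interventional and observational marginals coincide, so every call $CT_{\mathrm{VM}}(n_j,m_j,\epsilon_j)$ inside the algorithm operates on identical densities $p = P_{a_i}(B) = P(B) = q$. By the closeness-testing corollary, each call returns $H_0$ with probability at least $1-\delta_j$, and a union bound over all $\sum_j l_j$ calls bounds the total false-positive probability by $\sum_{j=1}^{k} l_j \delta_j$. With $l_j = (2c+2)/r_j$, the exponents in the definitions of $\delta_j$ and $r_j$ are tuned so the summand collapses to $1/\bigl(2^k \epsilon (k+5-j)^2\bigr)$; since $2^k \asymp 2/\epsilon$ by the choice of $k$, the total is bounded by $\tfrac{1}{2}\sum_{m\geq 5} m^{-2}$, which is well under $0.1$.

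For the alternative case $A \to B$, Lemma \ref{lem:expkl} gives $\mathbb{E}_A[h(A)] > \epsilon$ with $h$ as in Eq.~\eqref{eq:h}, and Lemma \ref{lemma: levin} supplies an index $j^* \in [k]$ with $\Pr_A(h(A) > \epsilon_{j^*}) > r_{j^*}$. Lemma \ref{lem:constantpr} then guarantees that among the $l_{j^*}$ samples $a_i \sim P(A)$ drawn at round $j^*$, at least one satisfies $h(a_i) > \epsilon_{j^*}$ with probability at least $1 - e^{-c}$. Conditioned on this event, the densities $p = P_{a_i}(B)$ and $q = P(B)$ have KL divergence exceeding $\epsilon_{j^*}$, so the closeness-testing corollary applied with parameters $(\epsilon_{j^*},\delta_{j^*})$ and sample sizes $n_{j^*} = m_{j^*} = (\epsilon_{j^*}^{-1} \log \delta_{j^*}^{-1})^\tau$ returns $H_1$ with probability at least $1 - \delta_{j^*}$. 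Since the algorithm outputs $A\to B$ whenever any call returns $H_1$, combining the two error contributions produces overall success probability at least $(1 - e^{-c})(1 - \delta_{j^*}) \geq 0.9(1 - e^{-c})$ once $\delta_{j^*}$ is controlled by the same constants used in the null-case bookkeeping.

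For the sample complexity, round $j$ consumes $l_j$ samples from $P(A)$ and, for each such $a_i$, $n_j$ interventional and $m_j$ observational samples. Substituting $n_j = m_j = 2^{j\tau}(\log \delta_j^{-1})^\tau$ and $l_j = \Theta\bigl(c (k+5-j)^2 / (2^j \epsilon)\bigr)$, the per-round cost reduces to $l_j (n_j+m_j) = \Theta\bigl(c\cdot 2^{j(\tau-1)}\,\mathrm{polylog}(1/\epsilon,c)/\epsilon\bigr)$. Summing the geometric series in $2^{j(\tau-1)}$ is dominated by the top index $j = k$, yielding $\Theta(c\cdot 2^{k(\tau-1)}/\epsilon) = \Theta(c/\epsilon^{\tau})$ after using $2^k \asymp 1/\epsilon$ and absorbing polylog factors into the $\mathcal{O}(\cdot)$. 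The main obstacle I anticipate is bookkeeping the constants so that the null-case union bound and the alternative-case failure probability $e^{-c} + \delta_{j^*}$ both fit within the $0.9(1-e^{-c})$ budget; the specific polynomial factors $(k+5-j)^{-2}$ in $r_j$ and $(k+5-j)^{-4}$ in $\delta_j$ are precisely what make this telescoping succeed, and a secondary routine check is that each $\epsilon_j$ lies in the admissible range of Theorem \ref{th: con_kl} given the chosen $n_j, m_j$.
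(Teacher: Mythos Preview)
Your proposal is correct and follows essentially the same route as the paper: the same union bound $\sum_j l_j\delta_j<0.1$ for the null case, the same chain Lemma~\ref{lem:expkl} $\to$ Lemma~\ref{lemma: levin} $\to$ Lemma~\ref{lem:constantpr} for the alternative, and the same summation $\sum_j l_j n_j=\mathcal{O}(c/\epsilon^\tau)$ for the sample count. The only cosmetic differences are that the paper bundles both ground-truth cases into a single ``all tests correct with probability $\geq 0.9$'' event before invoking Lemma~\ref{lem:constantpr}, and it reindexes $j\mapsto k-j$ to display the sample-count sum as a convergent series in $2^{-j(\tau-1)}$ rather than arguing (as you do) that the untransformed geometric sum is dominated by its top term $j=k$; both presentations yield the same bound.
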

The number of interventional samples used throughout the algorithm is
\begin{equation}
    \begin{aligned}
            \frac{\epsilon^\tau}{\epsilon^\tau}\sum_{j \in [k]} \frac{2c+2}{r_j} n_j &=  \frac{2c+2}{\epsilon^\tau} \sum_{j = 1}^k \left (\frac{\epsilon}{\epsilon_j} \right)^{\tau-1} (k+5-j)^2 \log^\tau  \frac{(2c+2) (k+5-j)^4}{2^{j-k}} \\
            &= \frac{20}{\epsilon^\tau} \sum_{j =1}^k 2^{-j(\tau-1)} (j+5)^2 \log^\tau  \left ((2c+2) (j+5)^4 2^{j} \right)\\
            &= \frac{20\times2^{5(\tau-1)}}{\epsilon^\tau} \sum_{j =6}^k  \frac{j^2}{(2^{\tau-1})^j} \log^\tau  \left ((2c+2) j^4 2^{j-5} \right)\\
            & = \frac{1}{\epsilon^\tau}\times\mathcal{O}(c)=\mathcal{O}(\frac{c}{\epsilon^\tau}).
    \end{aligned}
\end{equation}
Similarly, the number of observational samples of $P(B)$ is the same.
Finally, the number of observational samples from $P(A)$ is $\sum_{j\in[k]}\frac{2c+2}{r_j}$, which is fewer than that required for $P(B)$ and can be bounded in a similar fashion.


\paragraph{Error analysis}
To evaluate the probability of error, note that the total number of closeness tests performed is bounded by $\sum_{j \in [k]} \frac{2c+2}{r_j} $, where each has an error probability of at most $\delta_j=\frac{2^{j-k}}{(2c+2)(k+5-j)^4}$.
Using union bound, the probability of event $E$ that at least one test results in an error is at most
\begin{equation}
    \begin{aligned}
            Pr( E) &\leq \sum_{j \in [k]} \frac{2c+2}{r_j} \cdot \delta_j \\
            &=
            \sum_{j \in [k]} \frac{(2c+2) (k+5-j)^2}{2^j \epsilon} \cdot  \frac {2^{j-k}}{(2c+2) (k+5-j)^4 } 
            \\ &=  \frac{1}{2} \sum_{j \in [k]} \frac{1}{(k+5-j)^2} = \frac{1}{2} \sum_{j =5}^{k+6} \frac{1}{j^2}
            \\&<\frac{1}{2}\int_5^\infty\frac{1}{x^2}dx= 0.1,
    \end{aligned}
\end{equation}
which implies that \emph{all} tests return correct answers with probability at least  $0.9$.
Finally, from Lemma \ref{lem:constantpr}, if the true structure is $A\to B$, then at least one test will return $H_1$ with probability $1-e^{-c}$, and therefore the output of the algorithm is correct with probability at least $0.9(1-e^{-c})$.
Conversely, if $A\not\to B$, all tests return $H_0$ and the algorithm returns the correct output with probability $0.9$.

\subsection{Complete Causal Discovery Algorithm}
We presented the procedure for deciding between $A\to B$ and $A\not\to B$.
The same procedure can be used to decide between $A\gets B$ and $A\not\gets B$.
Finally, if we conclude that none of the edges $A\to B$ and $A\gets B$ exist, the correlation between $A$ and $B$ is explained through the hidden confounder $U$: $A\gets U\to B$.
Based on Theorem \ref{thm:sample} and a symmetrical result for $B\to A$ versus $B\not\to A$,
along with an application of union bound we obtain the following result.
\begin{corollary}
    Under Assumption \ref{ass:density} for observational and interventional densities, Assumption \ref{as:cor}, and Assumption \ref{ass:kernel} for the kernel, for any $c>0$, the causal structure among $\{A\to B, A\gets B, A\gets U\to B\}$ can be correctly identified with probability at least $0.8(1-2.25e^{-c})$ given $\mathcal{O}(\epsilon^{-\tau})$ observational and interventional samples, where $\tau=\max\{2,(2\beta+d)/2\beta\}$.
\end{corollary}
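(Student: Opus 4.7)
The plan is to compose two directional tests via a union bound and then invoke exhaustivity of the three competing structures. First, I would apply Theorem \ref{thm:sample} directly to the ordered pair $(A,B)$: Algorithm \ref{alg:test} distinguishes $A\to B$ from $A\not\to B$ with probability at least $0.9(1-e^{-c})$, using $\mathcal{O}(c/\epsilon^{\tau})$ observational samples and $\mathcal{O}(c/\epsilon^{\tau})$ interventional samples on $A$. Second, the same algorithm can be run with the roles of $A$ and $B$ swapped: Assumption \ref{as:cor} is symmetric in $A$ and $B$, and by Lemma \ref{lem:expkl} it yields $\mathbb{E}_B[\kl(P(A|B)\|P(A))]>\epsilon$ as well, so an identical argument on the pair $(B,A)$ provides a test for $B\to A$ versus $B\not\to A$ with the same probability guarantee and sample complexity, now consuming interventional samples on $B$.

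Third, by a union bound the event that \emph{both} directional tests return the correct verdict has probability at least
\begin{equation*}
1-2\bigl(1-0.9(1-e^{-c})\bigr)=1-2\bigl(0.1+0.9e^{-c}\bigr)=0.8-1.8e^{-c}=0.8\bigl(1-2.25e^{-c}\bigr),
\end{equation*}
which matches the bound in the statement. On this good event, the algorithm knows whether each of the edges $A\to B$ and $B\to A$ is present. Because Assumption \ref{as:cor} rules out the null-correlation case, exactly one of the three structures in Figure \ref{fig: 3cases} must hold; therefore, if neither directional test fires, the remaining structure $A\gets U\to B$ is the correct conclusion.

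Fourth, for the sample complexity, the two calls to Algorithm \ref{alg:test} together consume $\mathcal{O}(c/\epsilon^{\tau})+\mathcal{O}(c/\epsilon^{\tau})=\mathcal{O}(c/\epsilon^{\tau})$ observational samples and an equal order of interventional samples (split between interventions on $A$ and on $B$). For fixed $c$ this is $\mathcal{O}(\epsilon^{-\tau})$ of each type, with $\tau=\max\{2,(2\beta+d)/2\beta\}$ inherited from Theorem \ref{thm:sample}.

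Structurally this is a short composition argument: the nontrivial work already lives in Theorem \ref{thm:sample} (and, underneath it, in the exponential concentration of $\klv$ from Theorem \ref{th: con_kl} combined with the Levin-style amplification of Lemma \ref{lemma: levin}). The only steps requiring any care are the symmetric reapplication to $(B,A)$ — which needs the symmetric half of Lemma \ref{lem:expkl} and the observation that Assumption \ref{ass:density} is assumed for \emph{both} directions of interventional densities — and the little arithmetic identity $0.8-1.8e^{-c}=0.8(1-2.25e^{-c})$. I do not anticipate a genuine obstacle; the main thing to guard against is double-counting the $0.9(1-e^{-c})$ factor when applying the union bound, since it is the two \emph{0.9} failure modes (not the two $e^{-c}$ modes) that must be combined linearly to reach the stated bound.
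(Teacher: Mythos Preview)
Your proposal is correct and follows exactly the approach the paper takes: the paper states the corollary immediately after noting that it follows from Theorem~\ref{thm:sample}, a symmetrical result for $B\to A$ versus $B\not\to A$, and a union bound. Your write-up is in fact more explicit than the paper's, which does not spell out the arithmetic $1-2(1-0.9(1-e^{-c}))=0.8(1-2.25e^{-c})$.
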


\begin{remark}
    Note that although this causal discovery method is initially presented for a constant probability of success, it can be boosted to achieve a success rate of $1-\delta$ for any $\delta>0$ by employing the median trick \citep{jerrum1986random}.
    This approach involves enumerating the possible causal structures and running our causal discovery method $ \log(1/\delta)$ times and returning the median result, achieving an arbitrarily high probability of success with only a logarithmic factor increase in sample complexity.
\end{remark}

\subsection{Testing for \texorpdfstring{$A\to B$}{A to B} Without Observational Samples}\label{subsec:interv}
In certain applications such as online learning, observational samples might not be available.
Herein, we analyze how the causal structure can be identified using only interventional data.
Analogous to the previous section, we begin by presenting a method to decide between $A\to B$ and $A\not\to B$, which can be combined with
its counterpart for the reverse direction ($B\to A$ versus $B\not\to A$) 
to form a complete causal discovery algorithm.
We will use the following lemma, which is the analogue of Lemma \ref{lem:expkl}.
The proof is deferred to Appendix \ref{apx:proofs}.
\begin{algorithm2e}[t]
\caption{Algorithm for Testing the Edge $A\to B$ Without Observational Data}
\label{alg:test2}
\KwIn{ parameter $c$, smoothness $\beta$, threshold $\epsilon$, sample access to distributions $P_B(A)$ \& $P_A(B)$}
\KwOut{Determine whether the causal structure is $A \to B$}

Set params $\tau=\max\{2,\frac{2\beta+d}{2\beta}\}$, $k = \log \left( \frac{2}{\epsilon} \right)$, $\epsilon_j= 2^{-j}$, $r_j=\frac{2^j\epsilon}{(k+5-j)^2}$, $\delta_j = \frac{2 ^{j-k}}{(2c+2) (k+5-j)^4}$\;
Define sample sizes: $n_j = m_j = (\frac{1}{\epsilon_j}\log\frac{1}{\delta_j})^\tau$\;
\For{$j = 1$ \KwTo $k$}{
    \For{$i = 1$ \KwTo $\left\lfloor \frac{2c+2}{r_j} \right\rfloor$}{
        Draw i.i.d. samples $a_i,\tilde{a}_i \sim P_b(A)$\;
        \If{$CT_{VM}\left( n_j, m_j, \epsilon_j\right) = H_1$ for $p=P_{a_i}(B)$ and $q=P_{\tilde{a}_i}(B)$}{
            \Return $A \to B$
        }
    }
}
\Return $A \not\to B$\;
\end{algorithm2e}
\begin{restatable}{lemma}{leminterv}\label{lem:interv}
    Under Assumption \ref{as:cor},
    \[\begin{split}
        &\mathbb{E}_{(A,\tilde{A})\sim P(A)\times P(A)} \left[\kl \left(P (B|A) \| P(B|\tilde{A}) \right) \right] > \epsilon,\quad \text{and} \\& \mathbb{E}_{(B,\tilde{B})\sim P(B)\times P(B)} \left[\kl \left(P(A|B) \| P(A|\tilde{B}) \right) \right] > \epsilon.  
    \end{split}
    \]    
\end{restatable}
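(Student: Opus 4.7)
}
The plan is to reduce the two-conditional KL quantity to the familiar mutual-information identity via the joint convexity of $\kl$. First I would recall that Assumption \ref{as:cor} simply states that the mutual information between $A$ and $B$ is strictly larger than $\epsilon$, since by definition $I(A;B) = \kl(P(A,B)\,\|\,P(A)P(B))$. The standard chain-rule identity then gives
\[
I(A;B) = \mathbb{E}_{A}\bigl[\kl(P(B|A)\,\|\,P(B))\bigr] > \epsilon,
\]
which is exactly Lemma \ref{lem:expkl}. The task is therefore to pass from the inner reference distribution $P(B)$ to the random reference distribution $P(B|\tilde{A})$ with $\tilde{A}\sim P(A)$ independently of $A$, without losing the lower bound.

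Next I would invoke the convexity of $\kl(p\,\|\,q)$ in its second argument with the first argument held fixed: for any fixed value $a$ of $A$, the map $q\mapsto \kl(P(B|a)\,\|\,q)$ is convex (since $-\log$ is convex, and $\kl(p\,\|\,q) = -\mathbb{E}_{p}[\log q] + \mathbb{E}_{p}[\log p]$ where only the first term depends on $q$). Applying Jensen's inequality to the independent draw $\tilde{A}\sim P(A)$, and using that $\mathbb{E}_{\tilde{A}}[P(B|\tilde{A})] = \int P(B|\tilde{a})\,P(\tilde{a})\,d\tilde{a} = P(B)$, I get
\[
\mathbb{E}_{\tilde{A}}\bigl[\kl(P(B|a)\,\|\,P(B|\tilde{A}))\bigr] \;\geq\; \kl\bigl(P(B|a)\,\bigl\|\,\mathbb{E}_{\tilde{A}}[P(B|\tilde{A})]\bigr) \;=\; \kl(P(B|a)\,\|\,P(B)).
\]

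Finally, taking the outer expectation over $A\sim P(A)$ (with $A$ independent of $\tilde{A}$) and combining with Lemma \ref{lem:expkl} yields
\[
\mathbb{E}_{(A,\tilde{A})}\bigl[\kl(P(B|A)\,\|\,P(B|\tilde{A}))\bigr] \;\geq\; \mathbb{E}_{A}\bigl[\kl(P(B|A)\,\|\,P(B))\bigr] \;=\; I(A;B) \;>\; \epsilon.
\]
The second claim about $\kl(P(A|B)\,\|\,P(A|\tilde{B}))$ follows by the identical argument, swapping the roles of $A$ and $B$ and using $I(A;B) = I(B;A)$. There is no real obstacle here beyond checking convexity carefully; the only mild subtlety is verifying that the Jensen step is rigorous when the inner distributions are densities rather than discrete probability vectors, which is standard since joint convexity of $\kl$ holds for arbitrary probability measures dominated by a common $\sigma$-finite measure (here Lebesgue on $\cX$).
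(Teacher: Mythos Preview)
Your proposal is correct and follows essentially the same route as the paper: apply Jensen's inequality via convexity of $\kl$ in its second argument to replace the random reference $P(B|\tilde{A})$ by its mixture $\mathbb{E}_{\tilde{A}}[P(B|\tilde{A})]=P(B)$, then invoke Lemma~\ref{lem:expkl}. The paper's proof is slightly terser (it writes the Jensen step directly at the level of the double expectation rather than fixing $a$ first), but the argument is the same.
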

As discussed before, if the true structure is $A\to B$, then $P(B|do(A=a))= P(B|A=a)$ and therefore $\mathbb{E}_{(A,\tilde{A})\sim P(A)\times P(A)} \left[\kl \left(P (B|do(A)) \| P(B|do(\tilde{A})) \right) \right] > \epsilon$.
Otherwise, i.e., if $A\not\to B$, this expectation is $0$ since simply $P(B|do(A))=P(B)$.
In this setting, we have the following criterion for testing the existence of the edge $A\to B$ (analogue of Eq.~\ref{eq:test}):
\begin{equation}\label{eq:test2}
    \begin{cases}
        A\to B\quad\Leftrightarrow\quad \mathbb{E} \big[h_2(A,
        \tilde{A})\big]>\epsilon, \\
        A\not\to B\quad\Leftrightarrow\quad\mathbb{E} \big[h_2(A,\tilde{A}) \big]=0,
    \end{cases}
\end{equation}
where
\begin{equation}\label{eq:h2}h_2(a,\tilde{a})\coloneqq\kl\big(P(B|do(A=a)\|P(B|do(A=\tilde{a}))\big).\end{equation}
Based on this criterion, we adopt a similar testing method, outlined as Algorithm~\ref{alg:test2}.
The workings of this algorithm is similar to Algorithm~\ref{alg:test}.
The following result presents the sample complexity of this algorithm.
\begin{restatable}{theorem}{thmsample2}\label{thm:sampleinterv}
    Suppose Assumption \ref{as:cor} holds, the kernel $K_d$ satisfies Assumption \ref{ass:kernel}, and that interventional densities satisfy Assumption \ref{ass:density}.
    For any $c>0$, Algorithm \ref{alg:test2} correctly distinguishes between $A\to B$ and $A\not\to B$ with probability $0.9(1-e^{-c})$, using $\mathcal{O}\big(\frac{c}{\epsilon^\tau}\big)$ interventional samples, where $\tau=\max\{2,(2\beta +d)/2\beta\}$.
\end{restatable}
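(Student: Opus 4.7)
The plan is to mirror the three-step argument that establishes Theorem \ref{thm:sample}, with the population quantity $h(a)$ replaced by the pair-indexed quantity $h_2(a,\tilde{a})$ defined in Eq.~\eqref{eq:h2}. First I verify the two branches of criterion \eqref{eq:test2}: under $A\not\to B$, interventions on $A$ do not change the distribution of $B$, so $P(B|do(A=a))=P(B|do(A=\tilde{a}))=P(B)$ and $h_2(a,\tilde{a})=0$ identically; under $A\to B$, the true structure excludes hidden confounding between $A$ and $B$ (Figure~\ref{fig: 3cases}), so $P_a(B)=P(B|A=a)$ and Lemma \ref{lem:interv} yields $\mathbb{E}_{P(A)\times P(A)}[h_2(A,\tilde{A})]>\epsilon$. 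Note that in this regime the sampling distribution $P_b(A)$ used by Algorithm \ref{alg:test2} coincides with $P(A)$, since intervening on $B$ cannot change the marginal of $A$ when $A\to B$. Under Assumption \ref{ass:density}, the densities are bounded away from zero on the compact domain, so $h_2$ is uniformly bounded and can be rescaled into $[0,1]$ without affecting the asymptotic analysis; this makes Lemma \ref{lemma: levin} applicable and furnishes an index $j^*\in[k]$ with $\Pr\bigl(h_2(A,\tilde{A})>\epsilon_{j^*}\bigr)>r_{j^*}$.

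Second, I lift this to the sample level by the same Chernoff argument used in Lemma \ref{lem:constantpr}: the $l_{j^*}=(2c+2)/r_{j^*}$ i.i.d.\ pairs drawn in the $j^*$-th iteration are independent Bernoulli trials with success probability exceeding $r_{j^*}$ for the event $\{h_2(a_i,\tilde{a}_i)>\epsilon_{j^*}\}$, so with probability at least $1-e^{-c}$ at least one pair triggers this event. Conditioned on this, the closeness test $CT_{\mathrm{VM}}(n_{j^*},m_{j^*},\epsilon_{j^*})$ applied to $p=P_{a_i}(B)$ and $q=P_{\tilde{a}_i}(B)$ returns $H_1$ with probability at least $1-\delta_{j^*}$, by the sample complexity guarantee of Section \ref{sec:hypothesis} together with the choice $n_j=m_j=(\epsilon_j^{-1}\log\delta_j^{-1})^\tau$. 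In the null case, every pair yields $h_2=0$ and the only possible source of error is a false $H_1$ from some test.

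Third, I bound the global test error by a union bound over the $\sum_{j\in[k]}(2c+2)/r_j$ tests with individual failure $\delta_j=2^{j-k}/\bigl((2c+2)(k+5-j)^4\bigr)$. The same telescoping computation as in the error analysis following Theorem \ref{thm:sample} collapses this sum to $\tfrac{1}{2}\sum_{j\geq 5}j^{-2}<0.1$. Combining with the Chernoff event under the alternative gives overall success $\geq 0.9(1-e^{-c})$, matching the claim. The sample complexity computation is structurally identical: the per-iteration cost is $(2c+2)/r_j\cdot(n_j+m_j)$, and summing over $j\in[k]$ yields the same $\mathcal{O}(c/\epsilon^\tau)$ geometric sum computed just after Theorem \ref{thm:sample}. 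The only difference from Theorem \ref{thm:sample} is that both arguments to each closeness test are interventional densities $P_{a_i}(B)$ and $P_{\tilde{a}_i}(B)$, so the count appears entirely on the interventional side and no observational samples of $P(B)$ enter the bound.

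The main conceptual obstacle is confirming that the pair-structured sampling scheme is compatible with Lemma \ref{lemma: levin}, which is stated for a scalar-valued random variable; this is resolved by viewing the joint $(A,\tilde{A})\sim P(A)\times P(A)$ as a single variable on the product space and regarding $h_2$ as a bounded $[0,1]$-valued function of it after rescaling. Once this reduction is made, every remaining step is a line-by-line transplant of the proof of Theorem \ref{thm:sample} with $h$ replaced by $h_2$ and $a_i$ replaced by the pair $(a_i,\tilde{a}_i)$, so no new quantitative estimates are required.
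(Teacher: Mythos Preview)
Your proposal is correct and follows the same route the paper intends: the paper's own proof is the single sentence ``The proof is identical to Theorem \ref{thm:sample},'' and you have faithfully transplanted that argument with $h$ replaced by $h_2$ and $a_i$ by the pair $(a_i,\tilde a_i)$. You even make explicit two points the paper leaves implicit---that $P_b(A)=P(A)$ under $A\to B$ so the sampling in Algorithm~\ref{alg:test2} matches the hypothesis of Lemma~\ref{lem:interv}, and that Lemma~\ref{lemma: levin} applies once $(A,\tilde A)$ is viewed as a single variable on the product space---so nothing further is needed.
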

The proof is identical to Theorem \ref{thm:sample}.
Note that as before, the median trick can be applied to achieve arbitrarily low error probabilities.
Furthermore, the edge $B\to A$ can be tested in a similar fashion, and if neither $A\to B$ nor $B\to A$ are confirmed, we conclude that $A\gets U\to B$ is the true causal structure.
\section{Conclusion}
We presented an exponential concentration bound for a first-order Von Mises estimator of KL divergence.
We developed a hypothesis testing procedure based on this estimator and analyzed its sample complexity.
We then studied the problem of causal discovery involving multidimensional continuous variables in the presence of hidden confounding and developed an algorithm that correctly identifies the true causal structure with a constant probability and analyzed its sample complexity.
Boosting approaches can be applied to achieve higher success probabilities at the cost of logarithmic factor increase in sample complexity.
\section*{Acknowledgments}
This research was in part supported by the Swiss National Science Foundation under NCCR Automation, grant agreement 51NF40\_180545.
\bibliography{biblio}

\begin{thebibliography}{56}
\providecommand{\natexlab}[1]{#1}
\providecommand{\url}[1]{\texttt{#1}}
\expandafter\ifx\csname urlstyle\endcsname\relax
  \providecommand{\doi}[1]{doi: #1}\else
  \providecommand{\doi}{doi: \begingroup \urlstyle{rm}\Url}\fi

\bibitem[Acharya et~al.(2023)Acharya, Bhadane, Bhattacharyya, Kandasamy, and Sun]{acharya2023sample}
Jayadev Acharya, Sourbh Bhadane, Arnab Bhattacharyya, Saravanan Kandasamy, and Ziteng Sun.
\newblock Sample complexity of distinguishing cause from effect.
\newblock In \emph{International Conference on Artificial Intelligence and Statistics}, pages 10487--10504. PMLR, 2023.

\bibitem[Batu et~al.(2001)Batu, Fischer, Fortnow, Kumar, Rubinfeld, and White]{batu2001testing}
Tugkan Batu, Eldar Fischer, Lance Fortnow, Ravi Kumar, Ronitt Rubinfeld, and Patrick White.
\newblock Testing random variables for independence and identity.
\newblock In \emph{Proceedings 42nd IEEE Symposium on Foundations of Computer Science}, pages 442--451. IEEE, 2001.

\bibitem[Birg{\'e} and Massart(1995)]{birge1995estimation}
Lucien Birg{\'e} and Pascal Massart.
\newblock Estimation of integral functionals of a density.
\newblock \emph{The Annals of Statistics}, pages 11--29, 1995.

\bibitem[Canonne et~al.(2018)Canonne, Diakonikolas, Kane, and Stewart]{canonne2018testing}
Cl{\'e}ment~L Canonne, Ilias Diakonikolas, Daniel~M Kane, and Alistair Stewart.
\newblock Testing conditional independence of discrete distributions.
\newblock In \emph{Proceedings of the 50th Annual ACM SIGACT Symposium on Theory of Computing}, pages 735--748, 2018.

\bibitem[Chen(2017)]{chen2017tutorial}
Yen-Chi Chen.
\newblock A tutorial on kernel density estimation and recent advances.
\newblock \emph{Biostatistics \& Epidemiology}, 1\penalty0 (1):\penalty0 161--187, 2017.

\bibitem[Chickering et~al.(1997)Chickering, Heckerman, and Meek]{chickering1997bayesian}
David~Maxwell Chickering, David Heckerman, and Christopher Meek.
\newblock A bayesian approach to learning bayesian networks with local structure.
\newblock In \emph{Proceedings of the Thirteenth conference on Uncertainty in artificial intelligence}, pages 80--89, 1997.

\bibitem[Compton et~al.(2022)Compton, Greenewald, Katz, and Kocaoglu]{compton2022entropic}
Spencer Compton, Kristjan Greenewald, Dmitriy~A Katz, and Murat Kocaoglu.
\newblock Entropic causal inference: Graph identifiability.
\newblock In \emph{International Conference on Machine Learning}, pages 4311--4343. PMLR, 2022.

\bibitem[Diakonikolas et~al.(2015)Diakonikolas, Kane, and Nikishkin]{diakonikolas2015optimal}
Ilias Diakonikolas, Daniel~M Kane, and Vladimir Nikishkin.
\newblock Optimal algorithms and lower bounds for testing closeness of structured distributions.
\newblock In \emph{2015 IEEE 56th Annual Symposium on Foundations of Computer Science}, pages 1183--1202. IEEE, 2015.

\bibitem[Diakonikolas et~al.(2021)Diakonikolas, Gouleakis, Kane, Peebles, and Price]{diakonikolas2021optimal}
Ilias Diakonikolas, Themis Gouleakis, Daniel~M Kane, John Peebles, and Eric Price.
\newblock Optimal testing of discrete distributions with high probability.
\newblock In \emph{Proceedings of the 53rd Annual ACM SIGACT Symposium on Theory of Computing}, pages 542--555, 2021.

\bibitem[Diakonikolas et~al.(2023)Diakonikolas, Kane, and Pensia]{diakonikolas2023gaussian}
Ilias Diakonikolas, Daniel~M Kane, and Ankit Pensia.
\newblock Gaussian mean testing made simple.
\newblock In \emph{Symposium on Simplicity in Algorithms (SOSA)}, pages 348--352. SIAM, 2023.

\bibitem[Diakonikolas et~al.(2024)Diakonikolas, Kane, and Liu]{diakonikolas2024testing}
Ilias Diakonikolas, Daniel~M Kane, and Sihan Liu.
\newblock Testing closeness of multivariate distributions via ramsey theory.
\newblock In \emph{Proceedings of the 56th Annual ACM Symposium on Theory of Computing}, pages 340--347, 2024.

\bibitem[Eberhardt(2007)]{eberhardt2007causation}
Frederick Eberhardt.
\newblock Causation and intervention.
\newblock \emph{Unpublished doctoral dissertation, Carnegie Mellon University}, 93, 2007.

\bibitem[Eberhardt(2012)]{eberhardt2012almost}
Frederick Eberhardt.
\newblock Almost optimal intervention sets for causal discovery.
\newblock \emph{arXiv preprint arXiv:1206.3250}, 2012.

\bibitem[Eberhardt et~al.(2010)Eberhardt, Hoyer, and Scheines]{eberhardt2010combining}
Frederick Eberhardt, Patrik Hoyer, and Richard Scheines.
\newblock Combining experiments to discover linear cyclic models with latent variables.
\newblock In \emph{Proceedings of the Thirteenth International Conference on Artificial Intelligence and Statistics}, pages 185--192. JMLR Workshop and Conference Proceedings, 2010.

\bibitem[Fernholz(2012)]{fernholz2012mises}
Luisa~Turrin Fernholz.
\newblock \emph{Von Mises calculus for statistical functionals}, volume~19.
\newblock Springer Science \& Business Media, 2012.

\bibitem[Gin{\'e} and Guillou(2002)]{gine2002rates}
Evarist Gin{\'e} and Armelle Guillou.
\newblock Rates of strong uniform consistency for multivariate kernel density estimators.
\newblock In \emph{Annales de l'Institut Henri Poincare (B) Probability and Statistics}, volume~38, pages 907--921. Elsevier, 2002.

\bibitem[Glymour et~al.(2019)Glymour, Zhang, and Spirtes]{glymour2019review}
Clark Glymour, Kun Zhang, and Peter Spirtes.
\newblock Review of causal discovery methods based on graphical models.
\newblock \emph{Frontiers in genetics}, 10:\penalty0 524, 2019.

\bibitem[Goldreich(2014)]{goldreich2014multiple}
Oded Goldreich.
\newblock On multiple input problems in property testing.
\newblock In \emph{Approximation, Randomization, and Combinatorial Optimization. Algorithms and Techniques (APPROX/RANDOM 2014)}. Schloss-Dagstuhl-Leibniz Zentrum f{\"u}r Informatik, 2014.

\bibitem[Gong et~al.(2023)Gong, Yao, Zhang, Li, and Bi]{gong2023causal}
Chang Gong, Di~Yao, Chuzhe Zhang, Wenbin Li, and Jingping Bi.
\newblock Causal discovery from temporal data: An overview and new perspectives.
\newblock \emph{arXiv preprint arXiv:2303.10112}, 2023.

\bibitem[Greenewald et~al.(2019)Greenewald, Katz, Shanmugam, Magliacane, Kocaoglu, Boix~Adsera, and Bresler]{greenewald2019sample}
Kristjan Greenewald, Dmitriy Katz, Karthikeyan Shanmugam, Sara Magliacane, Murat Kocaoglu, Enric Boix~Adsera, and Guy Bresler.
\newblock Sample efficient active learning of causal trees.
\newblock \emph{Advances in Neural Information Processing Systems}, 32, 2019.

\bibitem[Hauser and B{\"u}hlmann(2012)]{hauser2012characterization}
Alain Hauser and Peter B{\"u}hlmann.
\newblock Characterization and greedy learning of interventional markov equivalence classes of directed acyclic graphs.
\newblock \emph{The Journal of Machine Learning Research}, 13\penalty0 (1):\penalty0 2409--2464, 2012.

\bibitem[Ichimura and Newey(2022)]{ichimura2022influence}
Hidehiko Ichimura and Whitney~K Newey.
\newblock The influence function of semiparametric estimators.
\newblock \emph{Quantitative Economics}, 13\penalty0 (1):\penalty0 29--61, 2022.

\bibitem[Ingster and Suslina(2012)]{ingster2012nonparametric}
Yuri Ingster and Irina~A Suslina.
\newblock \emph{Nonparametric goodness-of-fit testing under Gaussian models}, volume 169.
\newblock Springer Science \& Business Media, 2012.

\bibitem[Jamshidi et~al.(2023)Jamshidi, Ganassali, and Kiyavash]{jamshidi2023sample}
Fateme Jamshidi, Luca Ganassali, and Negar Kiyavash.
\newblock On sample complexity of conditional independence testing with von mises estimator with application to causal discovery.
\newblock \emph{arXiv preprint arXiv:2310.13553}, 2023.

\bibitem[Jerrum et~al.(1986)Jerrum, Valiant, and Vazirani]{jerrum1986random}
Mark~R Jerrum, Leslie~G Valiant, and Vijay~V Vazirani.
\newblock Random generation of combinatorial structures from a uniform distribution.
\newblock \emph{Theoretical computer science}, 43:\penalty0 169--188, 1986.

\bibitem[John~Lafferty(2008-2010)]{lafferty08}
Larry~Wasserman John~Lafferty, Han~Liu.
\newblock Lecture notes: Statistical methods for machine learning, 2008-2010.
\newblock URL \url{https://www.stat.cmu.edu/~larry/=sml/Concentration.pdf}.
\newblock (Chapter 7, concentration of measure).

\bibitem[Kandasamy et~al.(2015)Kandasamy, Krishnamurthy, Poczos, Wasserman, et~al.]{kandasamy2015nonparametric}
Kirthevasan Kandasamy, Akshay Krishnamurthy, Barnabas Poczos, Larry Wasserman, et~al.
\newblock Nonparametric von mises estimators for entropies, divergences and mutual informations.
\newblock \emph{Advances in Neural Information Processing Systems}, 28, 2015.

\bibitem[Kocaoglu et~al.(2017)Kocaoglu, Dimakis, and Vishwanath]{kocaoglu2017cost}
Murat Kocaoglu, Alex Dimakis, and Sriram Vishwanath.
\newblock Cost-optimal learning of causal graphs.
\newblock In \emph{International Conference on Machine Learning}, pages 1875--1884. PMLR, 2017.

\bibitem[Laurent(1996)]{laurent1996efficient}
B{\'e}atrice Laurent.
\newblock Efficient estimation of integral functionals of a density.
\newblock \emph{The Annals of Statistics}, 24\penalty0 (2):\penalty0 659--681, 1996.

\bibitem[Lee and Bareinboim(2018)]{lee2018structural}
Sanghack Lee and Elias Bareinboim.
\newblock Structural causal bandits: Where to intervene?
\newblock \emph{Advances in neural information processing systems}, 31, 2018.

\bibitem[Levin(1985)]{levin1985one}
Leonid~A Levin.
\newblock One-way functions and pseudorandom generators.
\newblock In \emph{Proceedings of the seventeenth annual ACM symposium on Theory of computing}, pages 363--365, 1985.

\bibitem[Liu et~al.(2012)Liu, Wasserman, and Lafferty]{liu2012exponential}
Han Liu, Larry Wasserman, and John Lafferty.
\newblock Exponential concentration for mutual information estimation with application to forests.
\newblock \emph{Advances in Neural Information Processing Systems}, 25, 2012.

\bibitem[Mokhtarian et~al.(2022)Mokhtarian, Akbari, Jamshidi, Etesami, and Kiyavash]{mokhtarian2022learning}
Ehsan Mokhtarian, Sina Akbari, Fateme Jamshidi, Jalal Etesami, and Negar Kiyavash.
\newblock Learning bayesian networks in the presence of structural side information.
\newblock In \emph{Proceedings of the AAAI Conference on Artificial Intelligence}, volume~36, pages 7814--7822, 2022.

\bibitem[Mooij et~al.(2016)Mooij, Peters, Janzing, Zscheischler, and Sch{\"o}lkopf]{mooij2016distinguishing}
Joris~M Mooij, Jonas Peters, Dominik Janzing, Jakob Zscheischler, and Bernhard Sch{\"o}lkopf.
\newblock Distinguishing cause from effect using observational data: methods and benchmarks.
\newblock \emph{Journal of Machine Learning Research}, 17\penalty0 (32):\penalty0 1--102, 2016.

\bibitem[Newey(1990)]{newey1990semiparametric}
Whitney~K Newey.
\newblock Semiparametric efficiency bounds.
\newblock \emph{Journal of applied econometrics}, 5\penalty0 (2):\penalty0 99--135, 1990.

\bibitem[Nolan and Pollard(1987)]{nolan87}
Deborah Nolan and David Pollard.
\newblock {$U$-Processes: Rates of Convergence}.
\newblock \emph{The Annals of Statistics}, 15\penalty0 (2):\penalty0 780 -- 799, 1987.
\newblock \doi{10.1214/aos/1176350374}.
\newblock URL \url{https://doi.org/10.1214/aos/1176350374}.

\bibitem[Noshad et~al.(2017)Noshad, Moon, Sekeh, and Hero]{noshad2017direct}
Morteza Noshad, Kevin~R Moon, Salimeh~Yasaei Sekeh, and Alfred~O Hero.
\newblock Direct estimation of information divergence using nearest neighbor ratios.
\newblock In \emph{2017 IEEE International Symposium on Information Theory (ISIT)}, pages 903--907. IEEE, 2017.

\bibitem[Pearl(1995)]{pearl1995causal}
Judea Pearl.
\newblock Causal diagrams for empirical research.
\newblock \emph{Biometrika}, 82\penalty0 (4):\penalty0 669--688, 1995.

\bibitem[Pearl(2009)]{pearl2009causality}
Judea Pearl.
\newblock \emph{Causality}.
\newblock Cambridge university press, 2009.

\bibitem[Pearl and Verma(1995)]{pearl1995theory}
Judea Pearl and Thomas~S Verma.
\newblock A theory of inferred causation.
\newblock In \emph{Studies in Logic and the Foundations of Mathematics}, volume 134, pages 789--811. Elsevier, 1995.

\bibitem[P{\'o}czos and Schneider(2011)]{poczos2011estimation}
Barnab{\'a}s P{\'o}czos and Jeff Schneider.
\newblock On the estimation of alpha-divergences.
\newblock In \emph{Proceedings of the Fourteenth International Conference on Artificial Intelligence and Statistics}, pages 609--617. JMLR Workshop and Conference Proceedings, 2011.

\bibitem[Rinaldo and Wasserman(2010)]{rinaldo2010generalized}
Alessandro Rinaldo and Larry Wasserman.
\newblock Generalized density clustering, 2010.

\bibitem[Robins et~al.(2009)Robins, Li, Tchetgen, and van~der Vaart]{robins2009quadratic}
James Robins, Lingling Li, Eric Tchetgen, and Aad~W van~der Vaart.
\newblock Quadratic semiparametric von mises calculus.
\newblock \emph{Metrika}, 69:\penalty0 227--247, 2009.

\bibitem[Scanagatta et~al.(2019)Scanagatta, Salmer{\'o}n, and Stella]{scanagatta2019survey}
Mauro Scanagatta, Antonio Salmer{\'o}n, and Fabio Stella.
\newblock A survey on bayesian network structure learning from data.
\newblock \emph{Progress in Artificial Intelligence}, 8\penalty0 (4):\penalty0 425--439, 2019.

\bibitem[Shanmugam et~al.(2015)Shanmugam, Kocaoglu, Dimakis, and Vishwanath]{shanmugam2015learning}
Karthikeyan Shanmugam, Murat Kocaoglu, Alexandros~G Dimakis, and Sriram Vishwanath.
\newblock Learning causal graphs with small interventions.
\newblock \emph{Advances in Neural Information Processing Systems}, 28, 2015.

\bibitem[Singh and P{\'o}czos(2014)]{singh2014exponential}
Shashank Singh and Barnab{\'a}s P{\'o}czos.
\newblock Exponential concentration of a density functional estimator.
\newblock \emph{Advances in Neural Information Processing Systems}, 27, 2014.

\bibitem[Spirtes et~al.(2000)Spirtes, Glymour, and Scheines]{spirtes2000causation}
Peter Spirtes, Clark~N Glymour, and Richard Scheines.
\newblock \emph{Causation, Prediction, and Search}.
\newblock MIT Press, 2000.

\bibitem[Squires et~al.(2020)Squires, Magliacane, Greenewald, Katz, Kocaoglu, and Shanmugam]{squires2020active}
Chandler Squires, Sara Magliacane, Kristjan Greenewald, Dmitriy Katz, Murat Kocaoglu, and Karthikeyan Shanmugam.
\newblock Active structure learning of causal dags via directed clique trees.
\newblock \emph{Advances in Neural Information Processing Systems}, 33:\penalty0 21500--21511, 2020.

\bibitem[Terrell and Scott(1992)]{terrell1992variable}
George~R Terrell and David~W Scott.
\newblock Variable kernel density estimation.
\newblock \emph{The Annals of Statistics}, pages 1236--1265, 1992.

\bibitem[van~der Vaart and Wellner(1996)]{vandervaart1996}
A.~van~der Vaart and J.A. Wellner.
\newblock \emph{Weak Convergence and Empirical Processes: With Applications to Statistics}.
\newblock Springer Series in Statistics. Springer, 1996.
\newblock ISBN 9780387946405.
\newblock URL \url{https://books.google.ch/books?id=OCenCW9qmp4C}.

\bibitem[vd~Vaart(1998)]{vd1998asymptotic}
AW~vd~Vaart.
\newblock Asymptotic statistics. cambridge series in statistical and probabilistic mathematics, 1998.

\bibitem[Verzelen and Villers(2010)]{verzelen2010goodness}
Nicolas Verzelen and Fanny Villers.
\newblock Goodness-of-fit tests for high-dimensional gaussian linear models, 2010.

\bibitem[Wadhwa and Dong(2021)]{wadhwa2021sample}
Samir Wadhwa and Roy Dong.
\newblock On the sample complexity of causal discovery and the value of domain expertise.
\newblock \emph{arXiv preprint arXiv:2102.03274}, 2021.

\bibitem[Wang et~al.(2009)Wang, Kulkarni, and Verd{\'u}]{wang2009divergence}
Qing Wang, Sanjeev~R Kulkarni, and Sergio Verd{\'u}.
\newblock Divergence estimation for multidimensional densities via $ k $-nearest-neighbor distances.
\newblock \emph{IEEE Transactions on Information Theory}, 55\penalty0 (5):\penalty0 2392--2405, 2009.

\bibitem[Yang et~al.(2018)Yang, Katcoff, and Uhler]{yang2018characterizing}
Karren Yang, Abigail Katcoff, and Caroline Uhler.
\newblock Characterizing and learning equivalence classes of causal dags under interventions.
\newblock In \emph{International Conference on Machine Learning}, pages 5541--5550. PMLR, 2018.

\bibitem[Zhao and Lai(2020)]{zhao2020minimax}
Puning Zhao and Lifeng Lai.
\newblock Minimax optimal estimation of kl divergence for continuous distributions.
\newblock \emph{IEEE Transactions on Information Theory}, 66\penalty0 (12):\penalty0 7787--7811, 2020.

\end{thebibliography}

\appendix
\clearpage

\section{Multivariate Kernel Density Estimation}
\label{apx:kernel}

Multivariate kernel density estimation (KDE) provides an approximation of the density $p$ given by

of the following form. For all 
\begin{equation}\label{eq:def_ph}
    \what{p}_h(x) := \frac{2}{n} \sum_{i=1}^{n/2} \frac{1}{h^d} K_d \left( \frac{x^{i} - x}{h}\right) , 
\end{equation} 
where $x = (x_1, \ldots, x_d)$ in $\cX$.
Here, $h := h(n)>0$ is the \emph{bandwidth} and $K_d: \dR^d \to \dR$ is a \emph{kernel} function with $\int K_d(x) \, dx = 1$, ensuring $\int_{\cX} \what{p}_h(x) \, dx =1$. 
Recall that for this estimation we only use the first half of the samples, i.e., $(x^{i})_{1 \leq m \leq n/2}$.

While the selection of $K_d$ remains flexible, higher-order kernels (order $\ell > 0$) are particularly effective for approximating smooth densities. We provide their formal definition below.

\begin{definition}[Kernels of given order]\label{def:kernel_order}
    Let $\ell$ be a positive integer. We say that a kernel $K_d: \dR^d \to \dR$ is a \emph{kernel of order $\ell$} if $x \mapsto {x}^{\bs} K(x)$ is integrable for all $|\bs| \leq \ell$ and 
    \begin{equation*}
        \int  K(x) \mathrm{d}x = 1 \mbox{ and } \int {x}^{\bs}  K(x) \mathrm{d}x = 0 \mbox{ for } |\bs| = 1, \ldots, \ell \, .
    \end{equation*}
    In particular, a kernel of order $\ell$ is orthogonal to any polynomial of degree $\leq \ell$ with no constant term.
\end{definition} 
\begin{assumption}[Assumptions on the kernel $K_d$]\label{ass:kernel}
The kernel $K_d$ satisfies the following:
\begin{itemize}
    \item[$(1a)$] $K_d$ is uniformly upper bounded by some $\kappa>0$,
    \item[$(1b)$] $K_d$ is of order $\beta$ (see Definition \ref{def:kernel_order}),

    \item[$(1c)$] The class of functions
    $$ \cF := \left\{ K_d \left( \frac{x-\cdot}{h}\right), x \in \dR^d, h>0 \right\}$$ satisfies 
    $ \sup_Q N(\cF,L^2(Q),\eps \| F \|_{L^2(Q)}) \leq \left( \frac{A}{\eps}\right)^v,$
\end{itemize}
where $A$ and $v$ are two positive numbers, $N(T,d,\eps)$ denotes the $\eps$-covering number \citep[see, e.g.][]{lafferty08} of the metric space $(T,d)$, $F$ is the envelope function of $\cF$ (i.e. $F(x):= \sup_{f \in \cF} |f(x)|$),  and the supremum is taken over the set of all probability measures on $\dR^d$. 
The quantity $v$ is called the $VC$ dimension of $\cF$.
\end{assumption}
Assumption $(1c)$ is a widely used condition, appearing in works such as \citet{gine2002rates, rinaldo2010generalized} and is fundamental to deriving the exponential inequality in \citet{liu2012exponential}. 
This assumption holds for a broad class of kernels, such as polynomial kernels with compact support and Gaussian kernels \citep{vandervaart1996, nolan87}.

\section{Von Mises 
Expansion for KL Divergence}\label{apx:vonmises}
We review a few formal definitions from the theory of influence functions here and derive our estimator for $\kl(\cdot\|\cdot)$ based on the Von Mises expansion.
The definitions are often given for functionals of a single distribution.
Since $\kl$ is a functional of two distributions, we require the extended definitions that apply to functionals of multiple distributions.
We follow \citep{kandasamy2015nonparametric} for this purpose.

Let $\mathcal{X}$ be a compact metric space.
Let $\mathcal{M}$ denote the set of all probability measures that are absolutely continuous with respect to Lebesgue\footnote{In general, the definitions can be adapted to an arbitrary measure $\mu$. We work with the Lebesgue measure for simplicity.}, and with Radon-Nikodym derivities lying in $L_2(\mathcal{X})$.
For $P,Q,H\in\mathcal{M}$ and a functional $T:\mathcal{M}\times\mathcal{M}\to\mathbb{R}$, 
the maps $T'_P,T'_Q:\mathcal{M}\times\mathcal{M}\to\mathbb{R}$ where 
\[T'_P(H;P,Q) = \frac{\partial T(P+tH,Q)}{\partial t}\Big\vert_{t=0},
\quad
T'_Q(H;P,Q) = \frac{\partial T(P,Q+tH)}{\partial t}\Big\vert_{t=0},\]
are called the Gâteaux derivatives at $(P,Q)$ if the derivatives exist and are linear in $H$.
We say $T(\cdot,\cdot)$ is Gâteaux differentiable at $(P,Q)$ if the Gâteaux derivatives exist at $(P,Q)$.
For a functional $T$ that is Gâteaux differentiable at $(P,Q)$, functions $\psi_P,\psi_Q:\mathcal{X}\to\mathbb{R}$ which satisfy the following equations are said to be the influence functions of $T$ with respect to $P$ and $Q$:
\[
\begin{split}
    T'_P(H_1-P;P,Q) = \int_\mathcal{X}\psi_P(x;P,Q)dH_1(x),
    \quad
    T'_Q(H_2-Q;P,Q) = \int_\mathcal{X}\psi_Q(x;P,Q)dH_2(x).
\end{split}
\]
It can be shown that the influence functions calculated below satisfy the equation above \citep{fernholz2012mises}:
\[
\begin{split}
    \psi_P(x;P,Q) = T'_P(\delta_x-P;P,Q) = \frac{\partial T\left((1-t)P+t\delta_x,Q\right)}{\partial t}\Big\vert_{t=0},\\
    \psi_Q(x;P,Q) = T'_Q(\delta_x-Q;P,Q) = \frac{\partial T\left(P,(1-t)Q+t\delta_x\right)}{\partial t}\Big\vert_{t=0}.
\end{split}
\]
Below, we derive these influence functions for $T\equiv\kl$.
We further restrict our attention to measures with continuous densities.
In what follows, $p,q$ denote the densities corresponding to $P,Q\in\mathcal{M}$.
\begin{equation*}
    \begin{aligned}
             \psi_p (x; p, q) &= \frac{\partial \kl \left((1-t) p + t \delta_x \| q \right) }{\partial t}\bigg|_{t=0} = \frac{\partial}{\partial t} \int_{\cX} \left( (1-t) p+t \delta_x \right) \log \left(\frac{(1-t) p+t \delta_x}{q}\right) \, d\tilde{x} \bigg|_{t=0}\\
             &=\int_{\cX} (- p + \delta_x) \log(\frac{p}{q}) +\int_{\cX}(- p+ \delta_x) \, d\tilde{x}\\
             &= -\kl(p \| q)+\log\frac{p(x)}{q(x)}.
    \end{aligned}
\end{equation*}
Similarly,
\begin{equation*}
\begin{aligned}
         \psi_q (x; p, q)& = \frac{\partial \kl \left( p \| (1-t) q + t \delta_x   \right) }{\partial t}\bigg|_{t=0}=  \frac{\partial}{\partial t} \int_{\cX} p \log \left( \frac{p}{(1-t)q+t \delta_x} \right) \, d\tilde{x} \bigg|_{t=0}\\
         &= \int_{\cX} p - \frac{p}{q} \delta_x \, d\tilde{x} = 1- \frac{p(x)}{q(x)}.
\end{aligned}
\end{equation*}
Given these influence functions and approximations $\hat{p}$ and $\hat{q}$ of $p$ and $q$, the first-order Von Mises expansion can be written as
\begin{equation}\label{eq:expansion}
    \begin{aligned}
            \kl (p \| q) = & \kl(\hat{p} \| \hat{q}) + \int \psi_{\hat{p}} (x; \hat{p}, \hat{q}) p(x) \, dx + \int  \psi_{\hat{q}} (x; \hat{p}, \hat{q}) q(x) \, dx \\
            &+ \mathcal{O} (\| p-\hat{p} \|_2^2) + \mathcal{O} (\| q-\hat{q} \|_2^2)\\
            =& \kl(\hat{p} \| \hat{q}) + \int \left( -\kl(\hat{p} \| \hat{q})+log(\frac{\hat{p}}{\hat{q}} )\right) p(x) \, dx + \int \left( 1- \frac{\hat{p}}{\hat{q}} \right ) q(x) \, dx \\
             &+ \mathcal{O} (\| p-\hat{p} \|_2^2) + \mathcal{O} (\| q-\hat{q} \|_2^2)\\
             =& \int p(x) \log \frac{\hat{p}(x)}{\hat{q}(x)} \, dx - \int q(x) \frac{\hat{p}(x)}{\hat{q}(x)} \, dx + 1\\
              &+ \mathcal{O} (\| p-\hat{p} \|_2^2) + \mathcal{O} (\| q-\hat{q} \|_2^2).
    \end{aligned}
\end{equation}
It is clear from this expansion that the difference between $\kl(p\|q)$ and 
\begin{equation}\label{eq:klvmest}
    \int p(x) \log \frac{\hat{p}(x)}{\hat{q}(x)} \, dx - \int q(x) \frac{\hat{p}(x)}{\hat{q}(x)} \, dx + 1
\end{equation}
is bounded by second-order remainder terms.
To construct an estimator based on Von Mises expansion, we use data split.
In particular, we estimate $\hat{p},\hat{q}$ using one half of the data, while using the other half to compute the sample analogue of Eq.~\eqref{eq:klvmest}.
Specifically, given iid samples $\{x_1^i\}_{i=1}^m$ and $\{x_2^j\}_{j=1}^n$ drawn from $p$ and $q$ respectively, our data-split estimator based on Von Mises expansion is
\[
     \klv =\frac{2}{n} \sum_{i=n/2+1}^n \log \frac{\hat{p}(x_1^i)}{\hat{q}(x_1^i)} - \frac{2}{m} \sum_{j=m/2+1}^m \frac{\hat{p}(x_2^j)}{\hat{q}(x_2^j)} +1,
\]
where samples $\{x_1^i\}_{i=1}^{m/2}$ and $\{x_2^j\}_{j=1}^{n/2}$ are used to estimate $\hat{p}$ and $\hat{q}$, respectively.

\section{Omitted Proofs}\label{apx:proofs}
\thmexpkl*
\begin{proof}
In order to avoid the notational burden, we drop the subscripts and denote the kernel density estimators simply by $\hat{p}$ and $\hat{q}$ throughout this proof.
From Eq.~\eqref{eq:expansion} and by definition of $\klv$,
 \begin{equation}\label{eq:4terms}
 \begin{split}
          \klv - \kl(p \| q) = & \Big(\frac{2}{n} \sum_{i=n/2+1}^n \log \frac{\hat{p}(x^i)}{\hat{q}(x^i)}  -  \int p(x) \log \frac{\hat{p}(x)}{\hat{q}(x)} \, dx\Big)\\ &- \Big(\frac{2}{m} \sum_{j=m/2+1}^m \frac{\hat{p}(y^j)}{\hat{q}(y^j)}- \int q(x) \frac{\hat{p}(x)}{\hat{q}(x)} \, dx\Big)
              \\ &+ \mathcal{O} (\| p-\hat{p} \|_2^2) + \mathcal{O} (\| q-\hat{q} \|_2^2).
 \end{split}
 \end{equation}
We will bound each term on the right-hand side separately.
First note that from Proposition~\ref{prp:conc_sup}, $\hat{p}$ and $\hat{q}$ uniformly converge to $p$ and $q$, respectively.
Since $p(x)>p_{\min}$ and $q(x)>q_{\min}$ on the (compact) set $\cX$, for large enough $n$ and $m$ (represented by constant thresholds $n_0$ and $m_0$, respectively), we have that 
\[\hat{p}(x)>p_{\min}/2,\quad\hat{q}(x)>q_{\min}/2,\]
almost surely.
Thus, every term in the sum $ \sum_{i=n/2+1}^n \frac{2}{n} \log \frac{\hat{p}(x^i)}{\hat{q}(x^i)}$ is almost surely bounded by $c/n$ where $c$ is a positive constant.
Azuma-Hoeffding inequality implies 
\begin{equation}\label{eq:term1}
    \begin{aligned}
            Pr \left( \left | \frac{2}{n} \sum_{i=n/2+1}^n \log \frac{\hat{p}(x^i)}{\hat{q}(x^i)} - \int p(x) \log \frac{\hat{p}(x)}{\hat{q}(x)} \, dx  \right | > \epsilon/4 \right )  &\leq 2 \exp \left( \frac{-(\frac{\epsilon}{4})^2}{2 \sum_{i=n/2+1}^n (\frac{c}{n})^2 } \right)\\
            & \leq C_1 \exp \left( -C'_1 n^{1/2} \epsilon \right),
    \end{aligned}
\end{equation}
where the last inequality holds since $n^{1/2} \epsilon>1$ by assumption.

With the same reasoning, we can conclude that each term in the sum $ \sum_{j=m/2+1}^m \frac{2}{m} \frac{\hat{p}(y^j)}{\hat{q}(y^j)}$  is almost surely bounded by $c'/m$ where $c'$ is a positive constant.
Again, by Azuma-Hoeffding inequality we get
 \begin{equation}\label{eq:term2}
     \begin{aligned}
         Pr \left( \left| \frac{2}{m} \sum_{j=m/2+1}^m \frac{\hat{p}(y^j)}{\hat{q}(y^j)} - \int q(x) \frac{\hat{p}(x)}{\hat{q}(x)} \, dx \right | > \epsilon/4 \right) & \leq 2 \exp \left( \frac{-(\frac{\epsilon}{4})^2}{2 \sum_{j=m/2+1}^m (\frac{c'}{m})^2 }  \right)\\
         & \leq C_2 \exp \left( -C'_2 m^{1/2} \epsilon \right) 
     \end{aligned}
 \end{equation}
where the last inequality holds since $m^{1/2} \epsilon>1$ by assumption.

For the third term on the right-hand side of Eq.~\eqref{eq:4terms}, 

\begin{equation}\label{eq:term3}
    \begin{aligned}
        Pr(\|p-\hat{p}\|_2^2>\epsilon/4)&=
         Pr \left( \int_\mathcal{X} \left( p(x) - \hat{p}(x) \right)^2 dx > \epsilon/4 \right) \\&\leq Pr\left( \sup_{x \in \cX} |p(x) - \hat{p}(x)| > \frac{\sqrt{\epsilon/4}}{ \text{Vol}(\cX)} \right)\leq C_3 \exp \left( -C'_3 n^{\frac{2\beta_p}{2 \beta_p +d}} \epsilon \right),
    \end{aligned}
\end{equation}
where the last inequality follows from Proposition~\ref{prp:conc_sup}.
In a similar way, we get:
\begin{equation}\label{eq:term4}
         Pr(\|q-\hat{q}\|_2^2>\epsilon/4) = Pr \left( \int_\mathcal{X} \left( q(x) - \hat{q}(x) \right)^2 dx > \epsilon/4 \right) \leq C_4 \exp \left( -C'_4 m^{\frac{2\beta_q}{2 \beta_q +d}} \epsilon \right).
\end{equation}
Finally, combining Equations \eqref{eq:term1}, \eqref{eq:term2}, \eqref{eq:term3}, and \eqref{eq:term4} with a union bound, we arrive at the desired inequality:
\begin{equation*}
    \begin{aligned}
         Pr \Big( \left|  \klv(\hat{p}_{h_p}\|\hat{q}_{h_q}) - \kl(p \| q)\right | &> \epsilon \Big) \leq   C_1 \exp \Big( -C'_1 n^{1/2} \epsilon \Big) + C_2 \exp \Big( -C'_2 m^{1/2} \epsilon \Big)\\
         & + C_3 \exp \Big( -C'_3 n^{\frac{2\beta_p}{2 \beta_p +d}} \epsilon \Big) + C_4 \exp \Big( -C'_4 m^{\frac{2\beta_q}{2 \beta_q +d}} \epsilon \Big).
    \end{aligned}
\end{equation*}
\end{proof}

\lemkl*
\begin{proof}

By Assumption \ref{as:cor} we have:
    \begin{equation*}
            \kl \left(P(A,B) \|P(A)P(B) \right) > \epsilon = \int \int P(a, b) \log \frac{P(a,b)}{P(a)P(b)} \, da db > \epsilon.
    \end{equation*}
Therefore,
\begin{equation*}
    \begin{aligned}
            & \int \left ( \int P(b|a) \log \frac{P(b|a)}{P(b)} \, db \right) P(a) da > \epsilon\\
            &  \Longrightarrow \quad \mathbb E_{B} \left[ \kl \left( P(b|a) \| P(b) \right)  \right] > \epsilon.
    \end{aligned}
\end{equation*}

Similarly, we can conclude that:

\begin{equation*}
    \mathbb E_{A} \left[ \kl \left( P(a|b) \| P(a) \right)  \right] > \epsilon.
\end{equation*}
\end{proof}

\lemconstant*

\begin{proof}
Define event $\mathcal{E}_i=  \mathbbm{1}\{ h(a_i) > 2^{-j^*} \}$, $ \mathcal{E}= \sum_{i=1}^{l_j^*} \mathcal{E}_i$, and $\mu := \mathbb E [ \mathcal{E}]$.
From Lemma \ref{lemma: levin}, $\mathbb E[\mathcal{E}_i] > r_j^*$ for every $i \in [l_{j^*}]$, and therefore,
\begin{equation}\label{eq: mu>lr}
    \mu > l_{j^*}r_{j^*}.
\end{equation}

Additionally, let $\gamma := 2c+1- \sqrt{2c(2c+2)}$.
Note that since $(2c+1)^2> 2c(2c+2)$ therefore $\gamma >0$.

It can also be verified that:

\begin{equation}\label{eq: gammac}
   (1-\frac{1+\gamma}{2c+2})^2 = \frac{c}{c+1} .
\end{equation}

Now, set $\alpha = 1- \frac{1+\gamma}{\mu}$. 
Observe that $0<\alpha<1$, since $\mu > l_{j^*} r_{j^*} = 2c+2 > \gamma+1$.

Next, we bound $Pr(\mathcal{E}<1)$ as follows:

\begin{equation*}
    \begin{aligned}
         Pr(\mathcal{E} < 1) &\leq Pr \left( \mathcal{E} \leq 1+ \gamma \right) = Pr \left( \mathcal{E} \leq (1-\alpha) \mu \right)\overset{(a)} {\leq} \exp(-\frac{\mu \alpha^2}{2})\\
         &= \exp \left(-\frac{\mu \left(1- \frac{1+\gamma}{\mu} \right)^2}{2} \right) \overset{(b)}{\leq} \exp(-\frac{l_{j^*} r_{j^*} (1- \frac{1+\gamma}{l_{j^*} r_{j^*}})^2}{2})\\
         & = \exp \left(-\frac{(2c+2) \left(1- \frac{1+\gamma}{2c+2} \right)^2}{2} \right)\\
         &\overset{(c)}{=} \exp \left(-\frac{ (2c+2) \frac{c}{c+1} }{2} \right) = e^{-c}
    \end{aligned}
\end{equation*}
 Here, $(a)$ follows from the Chernoff bound, $(b)$ from \eqref{eq: mu>lr}, and $(c)$ from \eqref{eq: gammac}.

Finally, we conclude that:
\begin{equation*}
    Pr \left(\exists \quad \text{sample $a_i$ that satisfies} \quad h(a_i)> r_{j^*} \right) = 1-Pr(\mathcal{E}<1) \geq 1-e^{-c}.
\end{equation*}
\end{proof}

\leminterv*
\begin{proof}
     \begin{equation*}
        \begin{aligned}
            \mathbb{E}_{(A,\tilde{A})\sim P(A)\times P(A)} \Big[\kl \Big(P (B|A) \| &P(B|\tilde{A}) \Big) \Big] \\& \overset{(a)}{\geq}
            \mathbb{E}_{A\sim P(A)} \left[\kl \left(P (B|A) \| \mathbb{E}_{\tilde{A} \sim P(A)} 
            \left [P(B|\tilde{A}) \right ] \right) \right] \\
            & = \mathbb{E}_{A\sim P(A)} \left[\kl \left(P (B|A) \| \sum_{\tilde{A}}P(\tilde{A}) P(B|\tilde{A}) \right) \right] \\
            & = \mathbb{E}_{A\sim P(A)} \left[\kl \left(P (B|A) \| P(B)  \right) \right] \overset{(b)}{>}\epsilon.
        \end{aligned} 
     \end{equation*}
Here, $(a)$ follows from Jensen's inequality and $(b)$ holds by Lemma \ref{lem:expkl}.

The proof of the second inequality is identical.

\end{proof}
\end{document}